\renewcommand{\vec}[1]{\bar{\mathbf{#1}}}
\newtheorem{theorem}{Theorem}[section]
\newtheorem{lemma}[theorem]{Lemma}
\newcommand{\vbrace}[1]{\lbrace #1 \rbrace}
\newcommand{\SMC}{SMC$^2$}
\newcommand{\iid}{\overset{i.i.d}{\sim}}
\newcommand{\etal}{et al.\ }
\newcommand{\ProposedModel}{\textsf{HQCD}}
\newcommand{\GLRT}{\textsf{GLRT}}
\newcommand{\WGLRT}{\textsf{WGLRT}}
\newcommand{\BOCPD}{\textsf{BOCPD}}
\newcommand{\RuLSIF}{\textsf{RuLSIF}}
\newcommand{\red}[1]{\color{red} #1}
\begin{document}

\title{\Large Hierarchical Quickest Change~Detection~via~Surrogates}

\author[1,2]{Prithwish Chakraborty\thanks{prithwi@vt.edu}}
\author[1,2]{Sathappan Muthiah\thanks{sathap1@vt.edu}}
\author[3]{Ravi Tandon\thanks{tandonr@email.arizona.edu}}
\author[1,2]{Naren Ramakrishnan\thanks{naren@cs.vt.edu}}
\affil[1]{Dept.\ of Computer Science, Virginia Tech, VA, USA}
\affil[2]{Discovery Analytics Center, Virginia Tech, VA, USA}
\affil[3]{Dept.\ of Electrical and Computer Engineering, University of Arizona, AZ, USA
}
\date{}

\maketitle

\begin{abstract}

Change detection (CD) in time series data is a critical problem
as it reveal changes in the 
underlying generative processes driving the time series.  Despite having received
significant attention, one important unexplored aspect is how to efficiently utilize additional correlated
information to improve the detection and the understanding of changepoints. 
We propose hierarchical quickest change detection (\ProposedModel),
a framework that formalizes the process of incorporating additional correlated 
sources for early changepoint detection.
The core ideas behind \ProposedModel~are rooted in the \emph{theory of
quickest detection} and \ProposedModel~can be regarded as its novel generalization to a hierarchical setting. The sources are classified into targets and surrogates, and  
\ProposedModel~leverages this structure to systematically assimilate
observed data to update changepoint statistics across layers.
The decision on actual changepoints are provided by minimizing the delay
while still maintaining reliability bounds.
In addition, \ProposedModel~also uncovers interesting relations between changes at targets
from changes across  surrogates. 
We validate \ProposedModel~for reliability and performance against several 
state-of-the-art methods for both synthetic dataset (known changepoints) and
several real-life examples (unknown changepoints). 
Our experiments indicate that we gain
significant robustness without loss of detection delay through 
\ProposedModel. 
Our real-life experiments also showcase the usefulness
of the hierarchical setting by connecting the surrogate sources 
(such as Twitter chatter) to target sources (such as Employment related protests
that ultimately lead to major uprisings). 
\end{abstract}

\section{Introduction}
  \label{sub:intro}

With the increasing availability of digital data sources, there is a
concomitant interest in using such sources to understand and detect 
events of interest, reliably and rapidly.
For instance, protest uprisings in unstable countries can be better analyzed
by considering a variety of sources such as economic
indicators (e.g.\ inflation, food prices)
and 
social media indicators
(e.g.\ Twitter and news activity).
Concurrently, detecting the onset of such events with minimal delay is of
critical importance. For instance, detecting a disease 
outbreak~\citep{painter2013using}
in real time can help in triggering preventive measures to control the outbreak.
Similarly, early alerts about possible protest uprisings can help in designing
traffic diversions and enhanced security to ensure peaceful protests.
\begin{figure}[h!]
\centering
\vspace{-18pt}
\includegraphics[width=0.95\columnwidth]{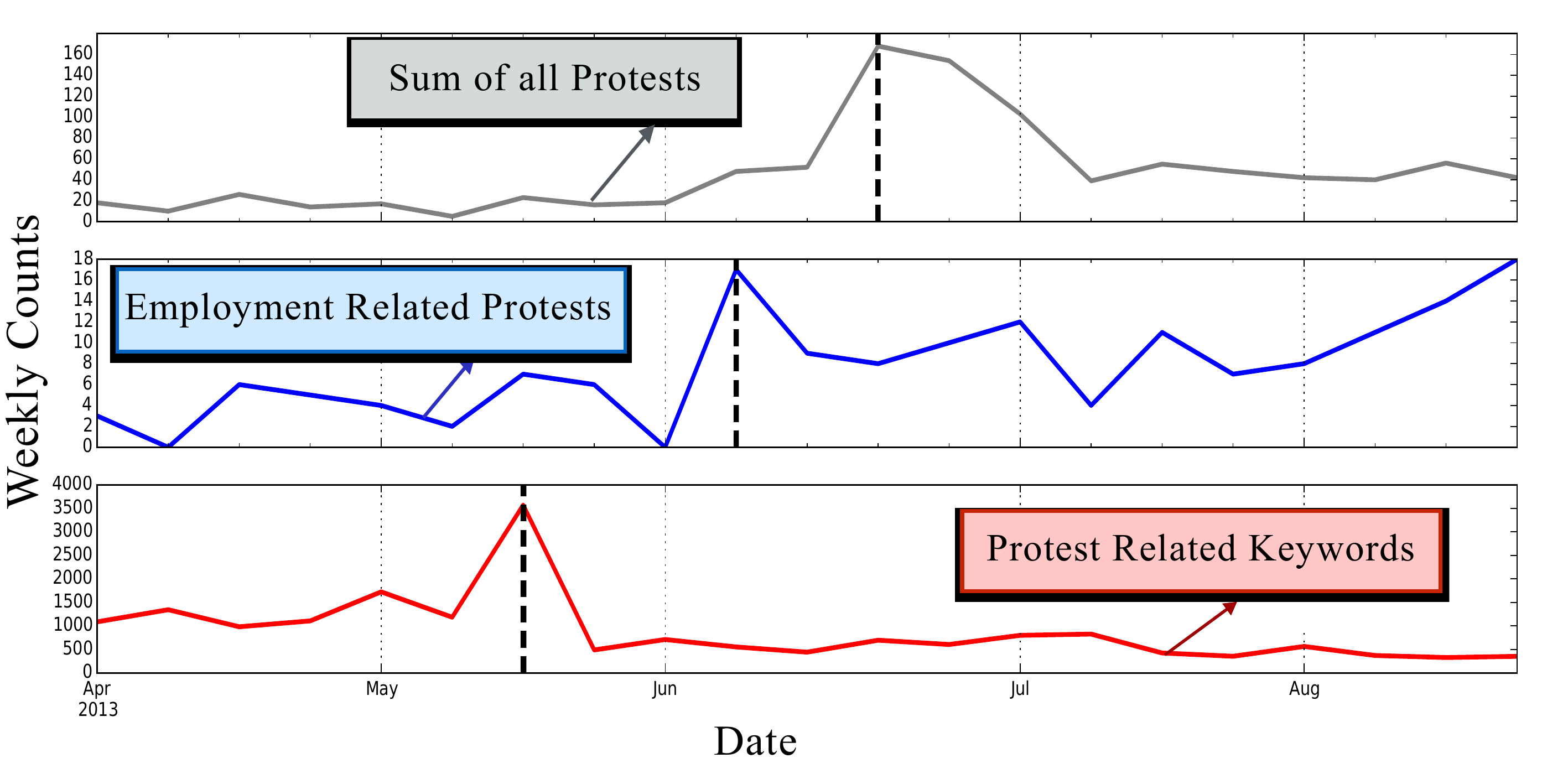}
\vspace{-10pt}
\caption{Illustrative example showing surrogate
sources which could have \textit{led to an early detection of onset} of 2013
Brazilian spring protests. (Top) Total protest counts in
Brazil over Apr'13 to Aug'13 exhibiting a sharp increase around mid-June.
(Middle) Employment and Wages related protests and (bottom) aggregated counts 
of a clusters of protest related keywords in Twitter.}
\label{fig:example}
\end{figure}

\nocite{page:cusum, 
wald1945sequential,shewhart:1925, carlin1992hierarchical,
shiryaev1963:optimum, siegmund1995using, lai1995sequential, 
lai2010sequential, adams2007bayesian, Liu201372}

\begin{table*}[tb]
\centering
\scriptsize
\caption{\label{tb:algoComparison}Comparison of state-of-the-art methods vs
Hierarchical Quickest Change Detection}
\centering
\begin{tabular}{@{}lllllll@{}}    \toprule
    {\color{blue}\textbf{Desirable}}  & Sequential                      & Window-                      & Bayesian                     & Relative             & Hierarchical & {\color{blue}\textbf{\ProposedModel}}\\
    {\color{blue}\textbf{Properties}} & GLRT                            & Limited                      & Online                       & Density-             & Bayesian     & {\color{blue}\textbf{(This}}       \\
                                      &\citeyear{shiryaev1963:optimum}  & GLRT                         & CPD                          & ratio                & Analysis of  &  {\color{blue}\textbf{Paper)}}      \\
                                      &\citeyear{siegmund1995using}     & \citeyear{lai1995sequential} & \citeyear{adams2007bayesian} & Estimation           & Change       &              \\
                                      &                                 & \citeyear{lai2010sequential} &                              & (RuLSIF)             & Point        &              \\
                                      &                                 &                              &                              & \citeyear{Liu201372} & Problems     &              \\
                                      &                                 &                              &                              &                      & \citeyear{carlin1992hierarchical} & \\
    \midrule[\heavyrulewidth]
    {{\color{red}Online}}              &\checkmark   & \checkmark & \checkmark & \checkmark &            & {\color{blue}\bf{\checkmark}}\\
    \midrule[.1pt]
    {{\color{red}Hierarchical}}        &             &            &            &            & \checkmark & {\color{blue}\bf{\checkmark}}\\
    \midrule[.1pt]
    {{\color{red}Bounded False}}\\
    {{\color{red}Alarm Rate /}}        & \checkmark  & \checkmark &            & \checkmark &            & {\color{blue}\bf{\checkmark}}\\
    {{\color{red}Detection delay}}\\
    \midrule[.1pt]
    {{\color{red}Handles}}             &             &            &            &            &            & {\color{blue}\bf{\checkmark}}\\
    {{\color{red}Non-IID data}}\\
\bottomrule
\end{tabular}
\end{table*}

Consider the evolution of the {\it Brazilian Spring}\ protests during mid June
2013 which are shown in terms of the total number of protests per week as in
Fig.~\ref{fig:example} (top panel).  This uprising can be further analyzed by
looking at individual categories of protests as shown in Fig.~\ref{fig:example}
(middle panel).  As seen, during the Brazilian Spring there was a sharp
increase in employment and wages related protests.  It is noteworthy that similar
observations can be made from Fig.~\ref{fig:example} by
observing the increase in Twitter activity for protest related keywords (bottom panel) such as ``Aborto,
Agravar, Central Dos Trabalhadores e Trabalhadoras Do Brasil'' during early
June.
This example leads to the following important observations: there is
potentially significant correlated information that can be leveraged to reduce
detection delay, and identifying the informative data source(s) can
help reduce the false positives.  Thus, appropriate usage of such
\textit{surrogate information} can potentially lead to change detection with
improved performance as well offer an interpretation behind the cause of the
changepoint.

Motivated by the aforementioned observations, we propose \textit{Hierarchical
Quickest Change~Detection} (\ProposedModel), for online change detection across multiple sources, 
viz.\ target and surrogates.  
Typically, targets are
sources of imminent interest (such as disease outbreaks or civil unrest);
whereas surrogates (such as counts of the word `protesta' in Twitter) by
themselves are not of significant interest. 
Thus, \ProposedModel~ is
aimed towards continuously utilizing both categories, but more focused on
\textit{early (or quickest) detection of  significant} changes across the
target sources. Traditional event (or change) detection approaches are not suitable for
such problems. These are either a) offline approaches~\citep{page:cusum, 
wald1945sequential,shewhart:1925, carlin1992hierarchical}
using the entire data retrospectively - thus not applicable to real-time scenarios,
or
b) online detection approaches~\citep{shiryaev1963:optimum, siegmund1995using, 
lai1995sequential, 
lai2010sequential, adams2007bayesian, Liu201372} 
with primary focus on the target source of interest and do
not utilize other correlated sources. Table~\ref{tb:algoComparison} shows a comparison of \ProposedModel~and several state-of-the-art methods in terms of the desirable attributes.

The main contributions of this paper are:\\
{\noindent $\bullet$ \ProposedModel~formalizes a {\it hierarchical structure} which in
    addition to the observed set of target sources (i.e., $S_{i}$'s),
    incorporates additional surrogates, denoted by $K_{j}$'s, and encodes
    propagation of change from surrogate to target sources.
}\\
{
\noindent
$\bullet$ \ProposedModel~presents a specialized {\it change detection metric} 
    that guarantees a maximum level of false alarm rate while reducing
    the detection delay in quickest detection framework.
    In addition, \ProposedModel~yields a natural methodology for analyzing the 
    causality of change in a particular target 
    source through a sequence of change propagations in other sources. 
}\\
{
\noindent
$\bullet$ \ProposedModel~presents a specialized sequential Monte Carlo based change
    detection framework that along with specialized change detection metrics
    enables hierarchical data to be analyzed in online fashion.
}\\
{
\noindent
$\bullet$ We {\it extensively test} \ProposedModel~on both synthetic and real
    world data.  We compare against state-of-the-art methods and illustrate the
    robustness of our methods and the usefulness of surrogates.  We also
    present a detailed analysis of three protest uprisings using real world
    data and show that the uprising could have been predicted a few weeks in
    advance by incorporating surrogate data such as Twitter chatter.
    Moreover, we analyzed target-surrogate relationships and uncover important
    propagation patterns that led to such uprisings.
}

\section{\ProposedModel--Hierarchical Quickest Change Detection}
  \label{sub:proposed}

We first provide a brief overview of classical QCD problem and then present the
HQCD framework.
\subsection{Quickest Change Detection (QCD)}
\label{sub:changepoint_overview}
Let us consider a data source $S$ changing over time and following different
stochastic processes before and after an unknown time $\Gamma$ (changepoint).
The task of QCD is to produce an estimate $\hat{\Gamma}=\gamma$ in an online
setting (i.e., at time $t$, only $S_1, \ldots, S_{t}$ is available).
Figure~\ref{fig:cpd_illustration} illustrates the two fundamental performance
metrics related to this problem. In the figure, $\Gamma = t_4$ is the actual
time-point when the changepoint happened. An early estimate such as $\gamma_1 =
t_1$ in the figure leads to a false alarm, where another estimate, such as
$\gamma_2 = t_6$ leads to an `additive delay' of $\gamma_2 - \Gamma = t_6 -
t_4$.  The goal of QCD is to design an online detection strategy which
minimizes the expected additive detection delay (EADD) while not exceeding a
maximum pre-specified probability of false alarm (PFA). QCD has been studied in
various contexts.  Some of the foremost methods have considered i.i.d.\
distributions with known (or unknown) parameters before and after unknown
changepoints~\citep{veeravalli2013quickest}.  Some of the more popular methods
have used CUSUM (cumulative sum of likelihood) based tests while  more general
approaches are adapted in GLRT (generalized likelihood ratio test) based
methods~\citep{glrt:unknownparam}.
                          
\begin{figure}[h]
  \centering
  \includegraphics[width=0.75\columnwidth]{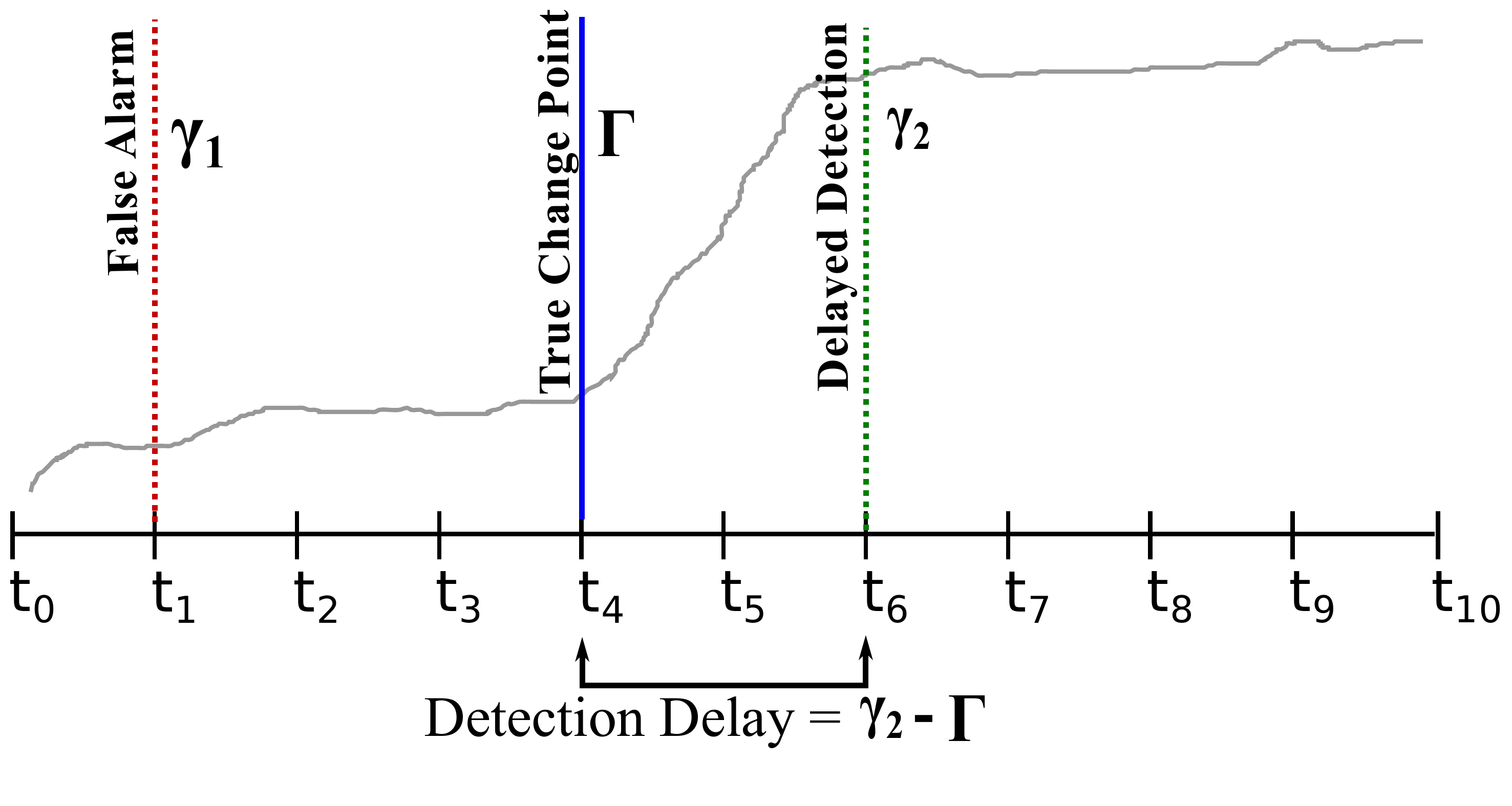}
\vspace{-12pt}
  \caption{\label{fig:cpd_illustration}
  Illustration of Quickest Change Detection (QCD): blue colored line
  represents the actual changepoint at time $\Gamma=t_4$. 
    (a) declaring a change at $\gamma_1$ leads to a
  false alarm, whereas
  (b) declaring the change at $\gamma_2$ leads to detection delay. QCD
can strike a tradeoff between false alarm and detection delay.}
\end{figure}

          \vspace{-8pt}

\vspace{0pt}
\subsection{Changepoint detection in Hierarchical Data}
\label{sub:hierarchical_cpd}
We next present our approach to generalize QCD to a hierarchical setting.  We
first describe a generic hierarchical model and then propose the QCD statistics
for such models in Section~\ref{ssub:Orignial_Formulation}.  For computational
feasibility, we present a bounded approximate of the same and our multilevel
changepoint algorithm in Section~\ref{ssub:Modified_Formulation}.

\begin{figure}[t!]
  \centering
    \includegraphics[width=0.85\columnwidth]{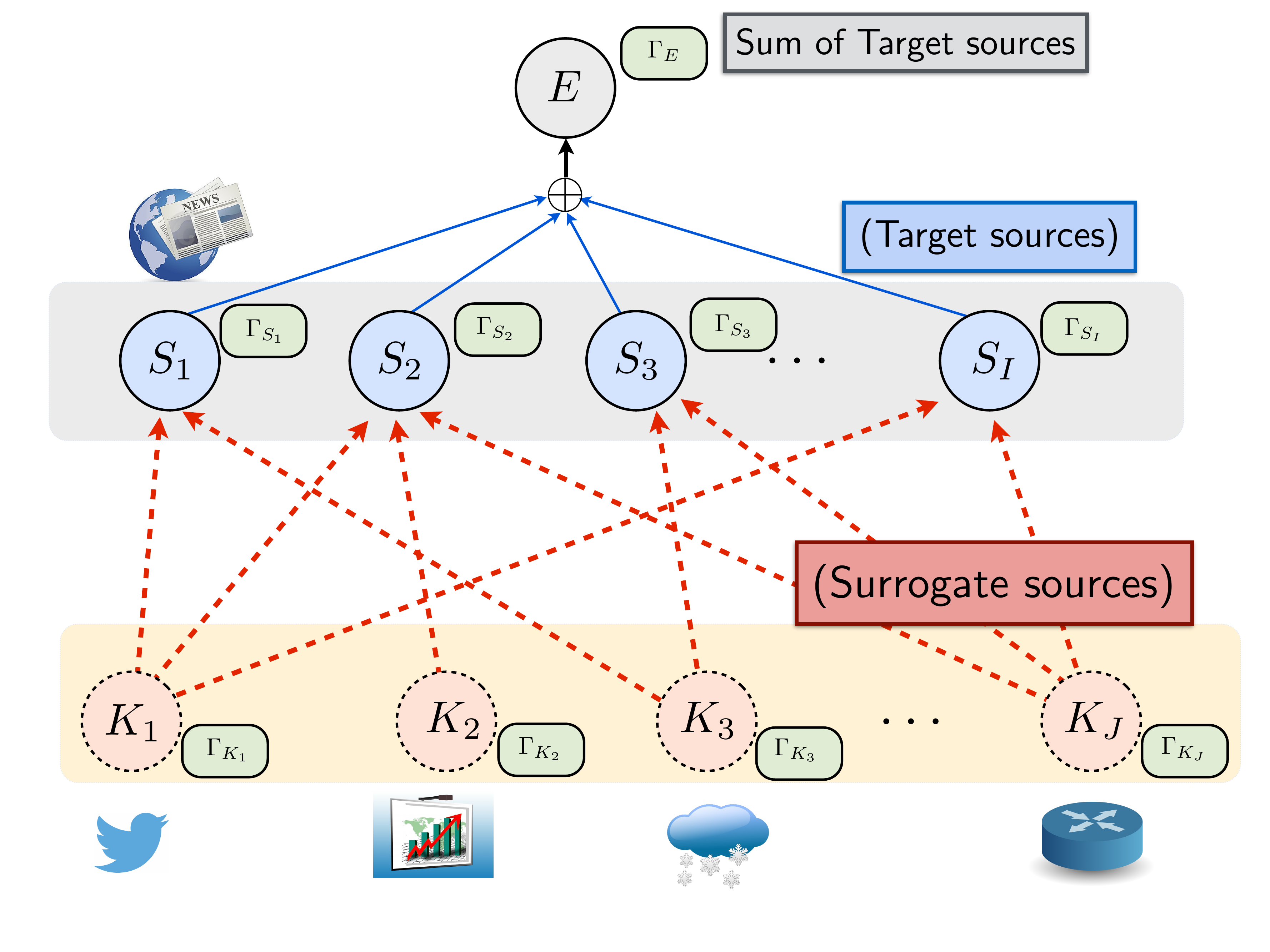}
    \vspace{-17pt}
\caption{Generative process for \ProposedModel.
As an example consider civil unrest protests. In the framework, different
protest types (such as Education- and Housing-related protests) form the
targets denoted by $S_{i}$'s. The total number of protests will be
denoted by the top-most variable $E$.  Finally, the set of surrogates,
such as counts of Twitter keywords, stock price data, weather data, network
usage data etc.  are denoted by $K_{j}$'s.
\label{fig:model}
}
\vspace{-15pt}
\end{figure}

\vspace{-4pt}
\subsubsection{Generic Hierarchical Model}
\label{ssub:generic}

Let us consider $\vec{S}^{(T)}$, a set of $I$  correlated temporal sequences
$\{S_{1}^{(T)}, S_{2}^{(T)}, \dots S_{I}^{(T)}\}$ where, $S_{i}^{(T)}$
represents the $i^{th}$ target data sequence $S_{i}^{(T)} = \left[s_{i}(1),
s_{i}(2), \dots, s_{i}(T)\right]$ for $i=1,\ldots, I$, collected up and until 
some time $T$.  The cumulative sum of the target sources $S_i$'s at time
$t$ is given by $E(t)$, i.e., $E(t) = \sum_{i=1}^{I}s_{i}(t)$. Concurrent to
target sources, we also observe a set of $J$ surrogate sources, $\vec{K}^{(T)}=
\{K_{1}^{(T)}, K_{2}^{(T)}, \dots, K_{J}^{(T)}\}$, where $K_{j}^{(T)} = \left[
k_{j}(1), k_{j}(2), \dots, k_{j}(T)\right]$, for $j=1, \ldots, J$, which may
either have a causal or effectual relationship with the target source set
$\vec{S}^{(T)}$ (see
Figure~\ref{fig:model}).
We assume that targets and surrogates follow a
stochastic Markov process as follows:

\vspace{-13pt}
\begin{align}
   P(\vec{S}^{(T)}, \vec{K}^{(T)}) = & P(S_{1}^{(T)}, \ldots, S_{I}^{(T)}, K_{1}^{(T)}, \ldots, K_{J}^{(T)}) \nonumber\\
  = & \prod_{t=1}^{T} \hspace{-2pt}\Bigg\{ \hspace{-2pt}\prod_{j=1}^{J}P_{t}^{\phi^{K}_{j}}(K_{j}(t))\times
    \prod_{i=1}^{I}P_{t}^{\phi^{S}_{i}}\hspace{-2pt}\left(S_{i}(t)| \vec{S}^{(t-1)},
    \vec{K}^{(t-1)}\right) \hspace{-3pt} \Bigg\}. \nonumber
\end{align} 
The binary variables $\phi^{K}_{j}, \phi^{S}_{i} \in \{0,1\}$ 
capture 
the notion of 
\textit{significant changes in events} through changes in distribution of the
generative process as follows: if the surrogate source $K_{j}$ undergoes a
\textit{change in distribution} at some time $t$, then, $\phi^{K}_{j}$ changes
from $0$ to $1$.  In other words, $P_{t}^{0}(K_{j})$ (respectively
$P_{t}^{1}(K_{j})$) denotes the pre-change (post-change) distribution of the
$j$th surrogate source.  Similarly, if the target source $S_{i}$ undergoes a
change in distribution at some time $t$, then $\phi^{S}_{i}$ changes from $0$
to $1$. In other words, $P_{t}^{0}(S_{i}|\cdot)$ (respectively
$P_{t}^{1}(S_{i}|\cdot)$) denotes the pre-change \break (post-change)
conditional distribution of the $j$th target data source. We denote
$\Gamma_{K_{j}}$ (respectively $\Gamma_{S_{i}}$) as the random variable
denoting the time at which $\phi^{K}_{j}$ (respectively, $\phi^{S}_{i}$)
changes from $0$ to $1$. Finally, we write $\vec{\Gamma}_{\vec{K}}=
(\Gamma_{K_1}, \ldots, \Gamma_{K_{J}})$, and $\vec{\Gamma}_{\vec{S}}=
(\Gamma_{S_1}, \ldots, \Gamma_{S_{I}})$ as the collective sets of changepoints
in the surrogate and target sources, respectively. Finally, denote
$\Gamma_{E}$ as the changepoint random variable for the top layer,
$E$, which represents the sum total of all target sources. 

\vspace{-4pt}
\subsubsection{From QCD to HQCD}
\label{ssub:Orignial_Formulation}

We extend the concepts of QCD presented in
Section~\ref{sub:changepoint_overview} to multilevel setting by formalizing the problem
as the \textit{earliest} detection of the set of all $(J+I+1)$ changepoints,
i.e., $\vec{\Gamma} = \{\vec{\Gamma}_{\vec{K}}, \vec{\Gamma}_{\vec{S}},
\Gamma_{E}\}$ having observed the target and surrogate sources i.e.
$\left(\vec{S}^{(T)}, \vec{K}^{(T)}\right)$. 
Let $\vec{\gamma}= \{\vec{\gamma}_{\vec{K}}, \vec{\gamma}_{\vec{S}},
\gamma_{E}\}$ be the $(J+I+1)$ vector of decision variables for the
changepoints.  To measure detection performance,
we define the following two novel performance criteria:
\vspace{5pt}

\noindent \textbf{\small \underline{Multi-Level Probability-of-False-Alarm (ML-PFA)}}: 
\begin{align}
\text{ML-PFA}(\vec{\gamma})= \mathbb{P}\left(\vec{\gamma} \preceq \vec{\Gamma} \right), 
\label{originalPFA}
\end{align}
where for any two $N$ length vectors $a\preceq b$, the notation implies
$a_{i}\leq b_{i}$, for $i=1, \ldots, N$. For instance, consider the example of
$I=1$ target, and $J=1$ surrogate. Then $\vec{\Gamma}= (\Gamma_{K_1},
\Gamma_{S_1})$ and $\gamma=(\gamma_{K_1}, \gamma_{S_1})$, and the probability
of multi-level false alarm is given by $\text{ML-PFA}(\gamma)=
\mathbb{P}(\gamma_{K_1}\leq \Gamma_{K_1}, \gamma_{S_{1}}\leq \Gamma_{S_{1}})$.
This definition of ML-PFA declares a false alarm only if all the $(J+I+1)$
change decision variables are smaller than the true changepoints.

\noindent \textbf{\small \underline{Expected Additive Detection Delay (EADD)}}: 
\begin{equation}
  \begin{small}
  \label{EADD}
\text{EADD}(\gamma)= \mathbb{E} \left(   | \gamma - \vec{\Gamma}|_{1} \right)
=\underbrace{\sum_{j=1}^{J} \mathbb{E}(|\gamma_{K_{j}}- \Gamma_{K_{j}}|)}_{\text{Surrogate layer delay}} 
 +\underbrace{\sum_{i=1}^{I} \mathbb{E}(|\gamma_{S_{i}}- \Gamma_{S_{i}}|)}_{\text{Target layer delay}}
 +\underbrace{\mathbb{E}|\gamma_{E}- \Gamma_{E}|}_{\text{Top layer delay}}
\end{small}
\end{equation}
 Given the observations, i.e., all target and surrogate sources $(\vec{S}^{(T)},
\vec{K}^{(T)})$ till time $T$ governed by unknown changepoints
$\vec{\Gamma}$, we aim to make an optimal decision $\gamma$ about these
changepoints under the following criterion
\begin{align}
  \label{eq:cpd_1}
 \gamma^{*}(\alpha)=  \mbox{ arg} \min \limits_{\gamma} \text{ EADD}(\gamma)
\hspace{15pt} \text{ s.t. } \text{ML-PFA}(\gamma)\leq \alpha.
\end{align}
In other words, $\gamma^{*}(\alpha)$ is the optimal change decision vector
which minimizes the EADD while guaranteeing that the ML-PFA is no more than a
tolerable threshold $\alpha$. We note that the above optimal test is
challenging to implement for real-world data sets due to following issues: a)
it requires the knowledge of pre- and post- change distributions (for all
sources) and the distribution of the changepoint random vector $\vec{\Gamma}$,
b) unlike single source QCD, finding the optimal $\gamma^{*}(\alpha)$ requires
a multi-dimensional search over multiple sources, making it computationally
expensive, and c) it does not discriminate between false alarms across
different sources. For instance, declaring false alarm at a target source (such
as premature declaration of the onset of protests or disease outbreaks) must be
penalized more in comparison to declaring false alarm at a surrogate source
(such as incorrectly declaring rise in Twitter activity). 

\vspace{-4pt}
\subsubsection{Bounded approximation of HQCD}
\label{ssub:Modified_Formulation}
We can circumvent the problem (b) of the original definition of ML-PFA as given in 
equation~\ref{originalPFA} by upper bounding it  in 
Theorem~\ref{th:modified_pfa}.
\begin{theorem}[Modified-PFA]
\label{th:modified_pfa}
Let $\vec{\gamma}=\vbrace{\vec{\gamma_S}, \vec{\gamma_K}, \gamma_E}$ be the a
set of estimates about true changepoint for targets, surrogates and
sum-of-targets, respectively.  Then under the condition of greater importance
to accurate target layer detections, ML-PFA (see~\ref{originalPFA}) is
upper-bounded by Modified-PFA, where:
\begin{equation}
\label{modPFA}
\begin{array}{l}
  \begin{small}
\text{Modified-PFA}(\gamma)\triangleq I \times \max \limits_{i}\mathbb{P}(\gamma_{S_{i}}\leq \Gamma_{S_{i}}) 
+ \min \limits_{j}\mathbb{P}(\gamma_{K_{j}}\leq \Gamma_{K_{j}}) + \mathbb{P}(\gamma_{E}\leq \Gamma_{E})
\end{small}
\end{array}
\end{equation}
\end{theorem}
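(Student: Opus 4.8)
The plan is to produce a single enlarged event that simultaneously (i) contains the multi-level false-alarm event $\{\vec{\gamma}\preceq\vec{\Gamma}\}$ and (ii) decomposes into exactly the three summands appearing in Modified-PFA. I would begin by writing ML-PFA explicitly, using the definition of $\preceq$, as the intersection
\[
\{\vec{\gamma}\preceq\vec{\Gamma}\}=\{\gamma_{E}\leq\Gamma_{E}\}\cap\bigcap_{i=1}^{I}\{\gamma_{S_{i}}\leq\Gamma_{S_{i}}\}\cap\bigcap_{j=1}^{J}\{\gamma_{K_{j}}\leq\Gamma_{K_{j}}\},
\]
i.e.\ the event in which \emph{every one} of the $J+I+1$ sources is declared prematurely. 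This definition is symmetric across all sources; the role of the stated hypothesis (greater importance to accurate target detection) is precisely to let me treat the target layer asymmetrically, by counting a premature declaration at \emph{any single} target as a false alarm rather than requiring all of them to be premature.

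Concretely, I would introduce the enlarged event
\[
B=\Big(\bigcup_{i=1}^{I}\{\gamma_{S_{i}}\leq\Gamma_{S_{i}}\}\Big)\cup\Big(\bigcap_{j=1}^{J}\{\gamma_{K_{j}}\leq\Gamma_{K_{j}}\}\Big)\cup\{\gamma_{E}\leq\Gamma_{E}\},
\]
in which a premature declaration at any one target, at all surrogates jointly, or at the top layer already triggers the alarm. Since the full intersection above implies each of its factors---in particular it implies $\gamma_{S_{1}}\leq\Gamma_{S_{1}}$, so the target union in $B$ holds---we obtain the set inclusion $\{\vec{\gamma}\preceq\vec{\Gamma}\}\subseteq B$, and hence $\text{ML-PFA}(\vec{\gamma})\leq\mathbb{P}(B)$ by monotonicity of probability. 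I would then apply the union bound to split $\mathbb{P}(B)$ into its three parts.

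Each part is bounded by an elementary inequality. The target union is controlled by the sum of its marginals, $\mathbb{P}\big(\bigcup_{i}\{\gamma_{S_{i}}\leq\Gamma_{S_{i}}\}\big)\leq\sum_{i=1}^{I}\mathbb{P}(\gamma_{S_{i}}\leq\Gamma_{S_{i}})\leq I\times\max_{i}\mathbb{P}(\gamma_{S_{i}}\leq\Gamma_{S_{i}})$, which is exactly where the leading factor $I$ and the $\max_{i}$ originate; the surrogate intersection satisfies $\mathbb{P}\big(\bigcap_{j}\{\gamma_{K_{j}}\leq\Gamma_{K_{j}}\}\big)\leq\min_{j}\mathbb{P}(\gamma_{K_{j}}\leq\Gamma_{K_{j}})$; and the top layer contributes $\mathbb{P}(\gamma_{E}\leq\Gamma_{E})$ unchanged. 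Summing the three bounds reproduces the right-hand side of~\eqref{modPFA} verbatim, completing the argument.

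The main obstacle---indeed the only non-mechanical point---is the first step: formalizing the qualitative hypothesis of \emph{greater importance to accurate target detections} into the precise event-level modification, namely replacing the target-layer intersection by a union (so that even one premature target declaration is penalized). Once that design choice is pinned down, every remaining inequality is a one-line application of monotonicity and the union bound, and no use of the Markov factorization of $P(\vec{S}^{(T)},\vec{K}^{(T)})$ is needed; the bound is therefore distribution-free and holds for arbitrary dependence among the sources.
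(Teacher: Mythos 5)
Your proof is correct and takes essentially the same route as the paper's: both decompose the joint false-alarm probability into the same three layer terms (via monotonicity and the union bound), bound the target contribution by $I \times \max_i \mathbb{P}(\gamma_{S_i}\leq\Gamma_{S_i})$, the surrogate intersection by $\min_j \mathbb{P}(\gamma_{K_j}\leq\Gamma_{K_j})$, and keep the top-layer term unchanged. Your enlarged event $B$ simply repackages the paper's chain of inequalities as an explicit set inclusion before applying the union bound, yielding the identical, distribution-free bound.
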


\begin{proof}
We can prove the upper bound of ML-PFA with the following reductions:
\begin{equation}
\label{mod1}
\begin{array}{lcl}
\text{ML-PFA}(\gamma) &= & \mathbb{P}(\gamma \preceq \Gamma)\\
&=& \mathbb{P}(\gamma_{\vec{S}}\preceq \Gamma_{\vec{S}}, \gamma_{\vec{K}}\preceq \Gamma_{\vec{K}}, \gamma_{E}\leq \Gamma_{E})\\
&\overset{(a)}{\leq}& \mathbb{P}(\gamma_{\vec{S}}\preceq \Gamma_{\vec{S}}) + \mathbb{P}(\gamma_{\vec{K}}\preceq \Gamma_{\vec{K}}) + \mathbb{P}(\gamma_{E}\leq \Gamma_{E})\\
&\overset{(b)}{\leq}& \sum_{i=1}^{I}\mathbb{P}(\gamma_{S_{i}}\leq \Gamma_{S_{i}}) + \mathbb{P}(\gamma_{\vec{K}}\preceq \Gamma_{\vec{K}}) + \mathbb{P}(\gamma_{E}\leq \Gamma_{E})\\
&\leq& I \times \max \limits_{i}\mathbb{P}(\gamma_{S_{i}}\leq \Gamma_{S_{i}}) + \mathbb{P}(\gamma_{\vec{K}}\preceq \Gamma_{\vec{K}}) + \mathbb{P}(\gamma_{E}\leq \Gamma_{E})\\
&\overset{(c)}{\leq}& I \times \max \limits_{i}\mathbb{P}(\gamma_{S_{i}}\leq \Gamma_{S_{i}}) + \min \limits_{j}\mathbb{P}(\gamma_{K_{j}}\leq \Gamma_{K_{j}})
+ \mathbb{P}(\gamma_{E}\leq \Gamma_{E}),
\end{array}
\end{equation}
where $(a)$ and $(b)$ follows from the union bound on probability and $(c)$
follows from the fact that the joint probability of a set of events is less
than the probability of any one event, i.e.,
$\mathbb{P}(\gamma_{\vec{K}}\preceq \Gamma_{\vec{K}})\leq
\mathbb{P}(\gamma_{K_{j}}\leq \Gamma_{K_{j}})$, for any $j=1,\ldots, J$, and
then taking the minimum over all $j$. The resulting upper bound in (\ref{mod1})
leads to the basis of the modification of the multi-level PFA:
\[
\text{Modified-PFA}(\gamma) \triangleq I \times \max \limits_{i}\mathbb{P}(\gamma_{S_{i}}\leq \Gamma_{S_{i}}) \nonumber\\
 + \min \limits_{j}\mathbb{P}(\gamma_{K_{j}}\leq \Gamma_{K_{j}}) + \mathbb{P}(\gamma_{E}\leq \Gamma_{E})\label{pf:modPFA}
\]
\end{proof}
Modified-PFA expression leads to 
intuitive interpretations as follows:
 (i) as false alarms at targets can have a higher
impact, it is desirable to keep the worst case PFA across these to be the smallest, or
equivalently, $\max_i  \mathbb{P}(\gamma_{S_i} \leq
\Gamma_{S_{i}})$ should be minimized.  
 (ii) false alarms at surrogates are not as
impactful and we can declare a false alarm if all of the surrogate
level detection(s) are unreliable, or equivalently, 
$\min_j \mathbb{P}(\gamma_{K_j} \leq \Gamma_{K_{j}})$ 
needs to be minimized.  (iii) notably, the above modification leads to a
low-complexity  change detection approach across multiple sources by \textit{locally} optimal detection
strategies avoiding a multi-dimensional search.

Based on Modified-PFA, we next present a compact test suite
to declare changes at pre-specified levels of maximum PFA as given in 
Theorem~\ref{th:test_modified} and incorporate specificity issues pointed 
out in problem (c) of the original formulation of PFA.
\begin{theorem}[Multi-level Change Detection]
  \label{th:test_modified}
  Let $\Gamma_{S_i}$ be the true change point random variable for the $i$th
  target source, $S_{i}$. Let $\Gamma_{K_j}$ and $\Gamma_E$ represent the same
  for the $j$th surrogate and the sum-of-targets, respectively. Let the data
  observed till time $T$ be $D^{(T)} \triangleq \left(\vec{S}^{(T)},
  \vec{K}^{(T)}\right)$ and $P(\vec{\Gamma} | D^{(T)})$ denote the estimate of
  the conditional distribution (see Section~\ref{sub:smc_for_changepoint_detection}).
  Then, if $\alpha_i, \beta_j, \lambda$ represent the PFA thresholds for the $S_i, K_j, E$, 
  the changepoint tests can be given as:
\begin{subequations}
\label{FinalTest}
\begin{align}
  \gamma_{S_{i}}(\alpha_{i})&= \text{inf} \left\{ n:\text{TS}_{S_{i}}(D^{(T)})\geq 
      \frac{\alpha_{i}}{1+ \alpha_{i}} \right\}, i=1, \ldots, I \label{eq:FinalTest:target}\\
  \gamma_{K_{j}}(\beta_{j})&= \text{inf} \left\{ n: \text{TS}_{K_{j}}(D^{(T)})\geq
      \frac{\beta_{j}}{1+ \beta_{j}}  \right\}, j=1, \ldots, J \label{eq:FinalTest:surrogate}\\
  \gamma_{E}(\lambda)&= \text{inf} \left\{ n: \text{TS}_{E}(D^{(T)})\geq 
      \frac{\lambda}{1+ \lambda}   \right\}, \label{eq:FinalTest:top}
\end{align}
\end{subequations}
where  $\text{TS}_{X}(D^{(T)})=\mathbb{P}(\Gamma_{X}\leq n | D^{(T)})$ is the
test statistic (TS) for a source $X$. 
\end{theorem}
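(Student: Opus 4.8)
The plan is to reduce the multi-source guarantee to a collection of one-dimensional (per-source) false-alarm bounds, exploiting the decomposition already established in Theorem~\ref{th:modified_pfa}. That theorem upper-bounds ML-PFA by a weighted sum of the \emph{local} false-alarm probabilities $\mathbb{P}(\gamma_{S_i}\le\Gamma_{S_i})$, $\mathbb{P}(\gamma_{K_j}\le\Gamma_{K_j})$ and $\mathbb{P}(\gamma_E\le\Gamma_E)$. Hence it suffices to show that each stopping rule in~\eqref{FinalTest} controls its own local false-alarm probability at the allocated budget $\alpha_i$, $\beta_j$ or $\lambda$, after which the three layers are handled by the identical argument applied to the corresponding posterior. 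I would first recognize each rule as a Bayesian (Shiryaev-type) posterior-threshold test: $\gamma_X$ is the first time the filtered posterior mass $\mathrm{TS}_X=\mathbb{P}(\Gamma_X\le n\mid D^{(n)})$ placed on ``change has already occurred'' crosses a fixed level $A_X$.

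The core step is an optional-stopping bound that converts the crossing rule into a false-alarm guarantee. I would write $\mathbb{P}(\gamma_X<\Gamma_X)=\sum_n\mathbb{P}(\gamma_X=n,\ \Gamma_X>n)$, use that $\{\gamma_X=n\}\in\mathcal{F}_n$, and condition on $\mathcal{F}_n$ to replace the indicator of $\{\Gamma_X>n\}$ by the posterior $1-\mathrm{TS}_X$; since $\mathrm{TS}_X\ge A_X$ on $\{\gamma_X=n\}$ by definition of the first-crossing time, the false-alarm posterior mass is at most $1-A_X$, and summing yields a bound that is a monotone function of $A_X$. Inverting this threshold-to-false-alarm calibration is what produces the particular level appearing in~\eqref{FinalTest}: writing the rule equivalently in terms of the posterior odds $\mathrm{TS}_X/(1-\mathrm{TS}_X)$ crossing the budget makes the threshold $\alpha_i/(1+\alpha_i)$ (resp.\ $\beta_j/(1+\beta_j)$, $\lambda/(1+\lambda)$) explicit. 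I would then verify that this calibration delivers $\mathbb{P}(\gamma_{S_i}\le\Gamma_{S_i})\le\alpha_i$ and the two analogues, being careful about the exact functional form, since the discrete-time overshoot at the crossing and the choice between bounding $\{\Gamma_X>n\}$ versus $\{\Gamma_X\ge n\}$ affect the constant.

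To assemble, I would substitute the three per-source bounds into the Modified-PFA expression of Theorem~\ref{th:modified_pfa}, giving $\text{Modified-PFA}(\gamma)\le I\max_i\alpha_i+\min_j\beta_j+\lambda$; because Theorem~\ref{th:modified_pfa} already upper-bounds ML-PFA by Modified-PFA, allocating the per-layer budgets so that this sum meets the global tolerance certifies $\text{ML-PFA}(\gamma)\le\alpha$, and choosing strictly tighter $\alpha_i$ at the target layer than the $\beta_j$ at the surrogate layer realizes the differential-specificity requirement flagged in problem (c). I expect the main obstacle to be the calibration step itself: making the conditioning rigorous requires $\mathrm{TS}_X$ to be a genuine $\{\mathcal{F}_n\}$-adapted filtered posterior, so that $\mathbb{P}(\Gamma_X>n\mid\mathcal{F}_n)=1-\mathrm{TS}_X$ holds exactly, and pinning the constant to $\alpha_i/(1+\alpha_i)$ rather than a nearby value demands careful handling of the crossing overshoot and of the boundary event $\{\gamma_X=\Gamma_X\}$. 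A secondary subtlety is that the additive combination inherited from the union bounds of Theorem~\ref{th:modified_pfa} treats the layers' false-alarm events as separable, whereas they are coupled through the generative dependence $P_t^{\phi^S_i}(S_i(t)\mid\vec{S}^{(t-1)},\vec{K}^{(t-1)})$; I would confirm that the marginally-valid per-source posterior bounds still suffice, precisely because Theorem~\ref{th:modified_pfa} has already discharged the joint structure into a sum.
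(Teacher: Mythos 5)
Your overall strategy is the same as the paper's: recognize each rule as a posterior-odds (Shiryaev-type) threshold test, and bound the local false-alarm probability by decomposing over the stopping event $\{\gamma_X=n\}$ and replacing the indicator of $\{\Gamma_X>n\}$ by the posterior mass $1-\mathrm{TS}_X\leq 1-A_X$; this is precisely step $(d)$ in the paper's proof of equation~\ref{PFAbound}, and handling the three layers by repeating the identical argument is also the paper's route. However, your calibration conclusion contradicts your own core step. With threshold $A_X=\alpha_i/(1+\alpha_i)$, the argument you describe yields
\begin{equation*}
\mathbb{P}(\gamma_{S_i}\leq\Gamma_{S_i})\;\leq\;1-A_X\;=\;\frac{1}{1+\alpha_i},
\end{equation*}
not $\mathbb{P}(\gamma_{S_i}\leq\Gamma_{S_i})\leq\alpha_i$. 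These coincide only when $\alpha_i=1/(1+\alpha_i)$, and for any realistic small budget $\alpha_i<(\sqrt{5}-1)/2$ the inequality you claim is false, since then $1/(1+\alpha_i)>\alpha_i$. The slip comes from a convention mix-up: in the theorem, $\alpha_i$ is a posterior-odds threshold (larger $\alpha_i$ means a stricter test and hence a \emph{smaller} false-alarm rate), not the false-alarm budget itself.

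This error propagates into your assembly step: substituting the correct per-source bounds into Theorem~\ref{th:modified_pfa} gives
\begin{equation*}
\text{Modified-PFA}(\gamma)\;\leq\;\frac{I}{1+\min_i\alpha_i}+\frac{1}{1+\max_j\beta_j}+\frac{1}{1+\lambda},
\end{equation*}
which is exactly the paper's Lemma~\ref{lemmaPFA}, rather than $I\max_i\alpha_i+\min_j\beta_j+\lambda$, and the two bounds behave oppositely as functions of the thresholds. Everything else you propose --- the equivalence between the log-odds test and the posterior test, the conditioning argument on $\{\gamma_X=n\}$, and the observation that marginal per-source bounds suffice because Theorem~\ref{th:modified_pfa} has already discharged the cross-layer dependence into a sum --- is correct and matches the paper. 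To repair the proof you only need to carry your own bound $1-A_X$ through consistently (and, as you rightly flag, settle the $\{\Gamma_X>n\}$ versus $\{\Gamma_X\geq n\}$ convention, a looseness the paper's own proof also glosses over).
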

\begin{proof}

In quickest change detection, our goal at time $T$ is to decide if a
change should be declared for some $n\leq T$ for a particular data source.
To this end, we can use the following change detection test 
\begin{align}
\gamma_{S_{i}}(\alpha_{i})= \text{inf} \left\{  n:    \log\left(  \frac{P\left(\Gamma_{S_{i}}\leq n| D^{(T)}\right)}{P\Big(\Gamma_{S_{i}}>n| D^{(T)}\Big)} \right) \geq \log(\alpha_{i})   \right\},\nonumber
\end{align}
which is equivalent to the following test:
\begin{align}
\gamma_{S_{i}}(\alpha_{i})= \text{inf} \left\{  n:   P\left(\Gamma_{S_{i}}\leq n| D^{(T)}\right)\geq \frac{\alpha_{i}}{1+ \alpha_{i}}   \right\}.\label{Ourtest}
\end{align}
Intuitively, the above test declares the change for the $i$th target source
$S_{i}$ at the smallest time $n$ for which the test statistic (i.e., posterior
probability of the change point random variable being less than $n$) exceeds a
threshold. 
The probability of false alarm for the above test can be bounded
in terms of the threshold $\alpha_{i}$ as: 
\begin{equation}
\label{PFAbound}
\begin{array}{lcl}
\mathbb{P}(\gamma_{S_{i}} \leq \Gamma_{S_{i}})
&=& \sum_{D^{(T)}}\sum_{n}\mathbb{P}(D^{(T)}, \gamma_{S_i}=n)\mathbb{P}(\Gamma_{S_{i}}>n| D^{(T)}, \gamma_{S_{i}}=n)\\
&\overset{(d)}{\leq}& \underbrace{\sum_{D^{(T)}}\sum_{n} \mathbb{P}(D^{(T)}, \gamma_{S_i}=n)}_{=1} \left(\frac{1}{1+\alpha_{i}}\right)\\
&=& \frac{1}{1+\alpha_{i}},
\end{array}
\end{equation}

where $(d)$ follows from the fact that given the observed data and the event,
$\gamma_{S_i}=n$, i.e., the change is declared at $n$, then it follows from
(equation~\ref{Ourtest}) that 
\[\mathbb{P}(\Gamma_{S_{i}}>n| D^{(T)},\gamma_{S_{i}}=n)\leq 1/(1+\alpha_{i})\]
Let us denote the test statistic (TS) for a data source $X$ as:
\[\text{TS}_{X}(D^{(T)})=\mathbb{P}(\Gamma_{X}\leq n | D^{(T)})\]
Then, then the multi-level change detection test is: 
\begin{subequations}
\label{pf:FinalTest}
\begin{align}
  \gamma_{S_{i}}(\alpha_{i})&= \text{inf} \lbrace n:\text{TS}_{S_{i}}(D^{(T)})\geq 
      \frac{\alpha_{i}}{1+ \alpha_{i}} \rbrace, i=1, \ldots, I \nonumber\\
  \gamma_{K_{j}}(\beta_{j})&= \text{inf} \lbrace n: \text{TS}_{K_{j}}(D^{(T)})\geq
      \frac{\beta_{j}}{1+ \beta_{j}}  \rbrace, j=1, \ldots, J \nonumber\\
  \gamma_{E}(\lambda)&= \text{inf} \lbrace n: \text{TS}_{E}(D^{(T)})\geq 
      \frac{\lambda}{1+ \lambda}   \rbrace \nonumber
\end{align}
\end{subequations}
\end{proof}
From Theorem~\ref{th:test_modified},  we can infer the following boundedness
property of Modified-PFA as expressed in the following Lemma.

\begin{lemma}\label{lemmaPFA}
  If we define $\alpha \overset{\Delta}{=} \min_i (\alpha_i)$ and 
  $\beta \overset{\Delta}{=} \max_j (\beta_j)$, then Modified-PFA
  in equation~\ref{modPFA} can be bounded as:
\begin{equation}
  \text{Modified-PFA}(\gamma) \leq {I \times \frac{1}{1 + \alpha}} 
                   + {\frac{1}{1 + \beta}} +  {\frac{1}{1+\lambda}}
\end{equation}
\end{lemma}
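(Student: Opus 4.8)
The plan is to combine the per-source false-alarm bound already established inside the proof of Theorem~\ref{th:test_modified} with the elementary monotonicity of the map $x \mapsto 1/(1+x)$, and then to push the outer $\max$ and $\min$ operations through these termwise inequalities. No new probabilistic machinery is needed; the lemma is essentially a bookkeeping consequence of~\eqref{PFAbound} once the two extremal operations in~\eqref{modPFA} are interpreted correctly.

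First I would observe that the computation in~\eqref{PFAbound} is not specific to the target layer: the identical argument, applied to the surrogate test~\eqref{eq:FinalTest:surrogate} and the top-layer test~\eqref{eq:FinalTest:top}, yields the three per-source bounds
\[
\mathbb{P}(\gamma_{S_i} \leq \Gamma_{S_i}) \leq \frac{1}{1+\alpha_i}, \quad
\mathbb{P}(\gamma_{K_j} \leq \Gamma_{K_j}) \leq \frac{1}{1+\beta_j}, \quad
\mathbb{P}(\gamma_E \leq \Gamma_E) \leq \frac{1}{1+\lambda},
\]
valid for every $i=1,\ldots,I$ and $j=1,\ldots,J$. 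This lets me treat each of the three summands of Modified-PFA in~\eqref{modPFA} separately.

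For the target term I would use that $a_i \le b_i$ for all $i$ implies $\max_i a_i \le \max_i b_i$, giving $\max_i \mathbb{P}(\gamma_{S_i}\le\Gamma_{S_i}) \le \max_i 1/(1+\alpha_i)$. Since $1/(1+x)$ is decreasing, the maximum of $1/(1+\alpha_i)$ is attained at the \emph{smallest} $\alpha_i$, i.e. $\max_i 1/(1+\alpha_i) = 1/(1+\min_i\alpha_i) = 1/(1+\alpha)$ by the definition $\alpha := \min_i\alpha_i$; multiplying by $I$ recovers the first term of the claimed bound. The top-layer term is immediate from the third inequality above.

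The surrogate term needs slightly more care because the outer operation is a $\min$ rather than a $\max$. Here I would invoke the fact that $a_j \le b_j$ for all $j$ still implies $\min_j a_j \le \min_j b_j$ (evaluate both sides at the minimizer of the right-hand side), so $\min_j \mathbb{P}(\gamma_{K_j}\le\Gamma_{K_j}) \le \min_j 1/(1+\beta_j)$; by monotonicity of $1/(1+x)$ the right side equals $1/(1+\max_j\beta_j) = 1/(1+\beta)$ with $\beta := \max_j\beta_j$. Summing the three contributions gives the lemma. I do not anticipate a genuine obstacle; the only point demanding attention is making the extremal operations interact with the inverse map in the correct direction --- in particular verifying that the $\min$ over surrogates correctly pairs with $\beta = \max_j\beta_j$, and not with a minimum of the surrogate thresholds.
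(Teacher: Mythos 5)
Your proof is correct and follows essentially the same route the paper intends: the paper gives no separate proof for this lemma, inferring it directly from the per-source bound $\mathbb{P}(\gamma_{S_i}\leq\Gamma_{S_i})\leq 1/(1+\alpha_i)$ established in the proof of Theorem~\ref{th:test_modified}, applied identically at all three layers and combined with the definitions of $\alpha$ and $\beta$. Your only addition is making explicit the monotonicity bookkeeping (that the $\max$ over targets pairs with $\min_i\alpha_i$ and the $\min$ over surrogates pairs with $\max_j\beta_j$), which the paper leaves implicit.
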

\begin{algorithm2e}                                                                  
    \footnotesize
  \SetKwInOut{Parameter}{Parameters}
  \SetKwInOut{Input}{Input}
  \SetKwInOut{Output}{Output}                             
  \SetKw{Compute}{Compute}
  \SetKw{Find}{Find}  
  \SetKw{Update}{Update}  
  \SetKw{ReturnDecision}{Return Decision}  
  \DontPrintSemicolon                                                              
  \Input{At time $T$, Target and Surrogate Sources $D^{(T)} =\left(S^{(T)}, K^{(T)}\right)$}
      \Parameter{PFA threshold for targets ($\alpha$), surrogates ($\beta$), and sum of targets ($\lambda$) \\
       }
  \Output{Changepoint Decisions $\vec{\gamma_S}, \vec{\gamma_K}, {\gamma_E}$ 
  at each timepoint $T$}                   
  \BlankLine
  \For{each $T$ }{
    \Update joint posterior$P(\Gamma_K, \Gamma_S, \Gamma_E |D^{(T)})$\;     \tcp{target change detection}
    \For{$i \leftarrow 1$ \KwTo $I$}{
       \Compute target marginal $P(\Gamma_{S_i} | D^{(T)})$\;        \Find $\gamma_{S_i}(\alpha)$ using~\ref{eq:FinalTest:target}\; 
    } 
    $\vec{\gamma_S} \leftarrow \vbrace{\gamma_{S_1}(\alpha), \dots, \gamma_{S_I}(\alpha)}$\;
        \tcp{surrogate change detection}
    \For{$j \leftarrow 1$ \KwTo $J$}{
       \Compute surrogate marginal $P(\Gamma_{K_j} | D^{(T)})$\;        \Find $\gamma_{K_j}(\beta)$ using~\ref{eq:FinalTest:target}\; 
    } 
    $\vec{\gamma_K} \leftarrow \vbrace{\gamma_{K_1}(\beta), \dots, \gamma_{K_J}(\beta)}$\;
        \tcp{sum-of-targets change detection}
    \Compute sum-of-targets marginal $P(\Gamma_{E}| D^{(T)})$\;     \Find $\gamma_{E}(\lambda)$ using~\ref{eq:FinalTest:top}\; 
    \ReturnDecision $\vec{\gamma_S}, \vec{\gamma_K}, {\gamma_E(\lambda)}$ at $T$\;
  }
  \caption{\ProposedModel\  Multi-level Change Point Detection Algorithm
  \label{al:main}}
\end{algorithm2e} 
\vspace{-10pt}

\vspace{-5pt}
\section{HQCD for Protest Detection via Surrogates\label{sub:hierarchical_count}}

\begin{figure}[t!]
\begin{center}
  \begin{subfigure}{.49\columnwidth}
      \includegraphics[width=1.0\textwidth]{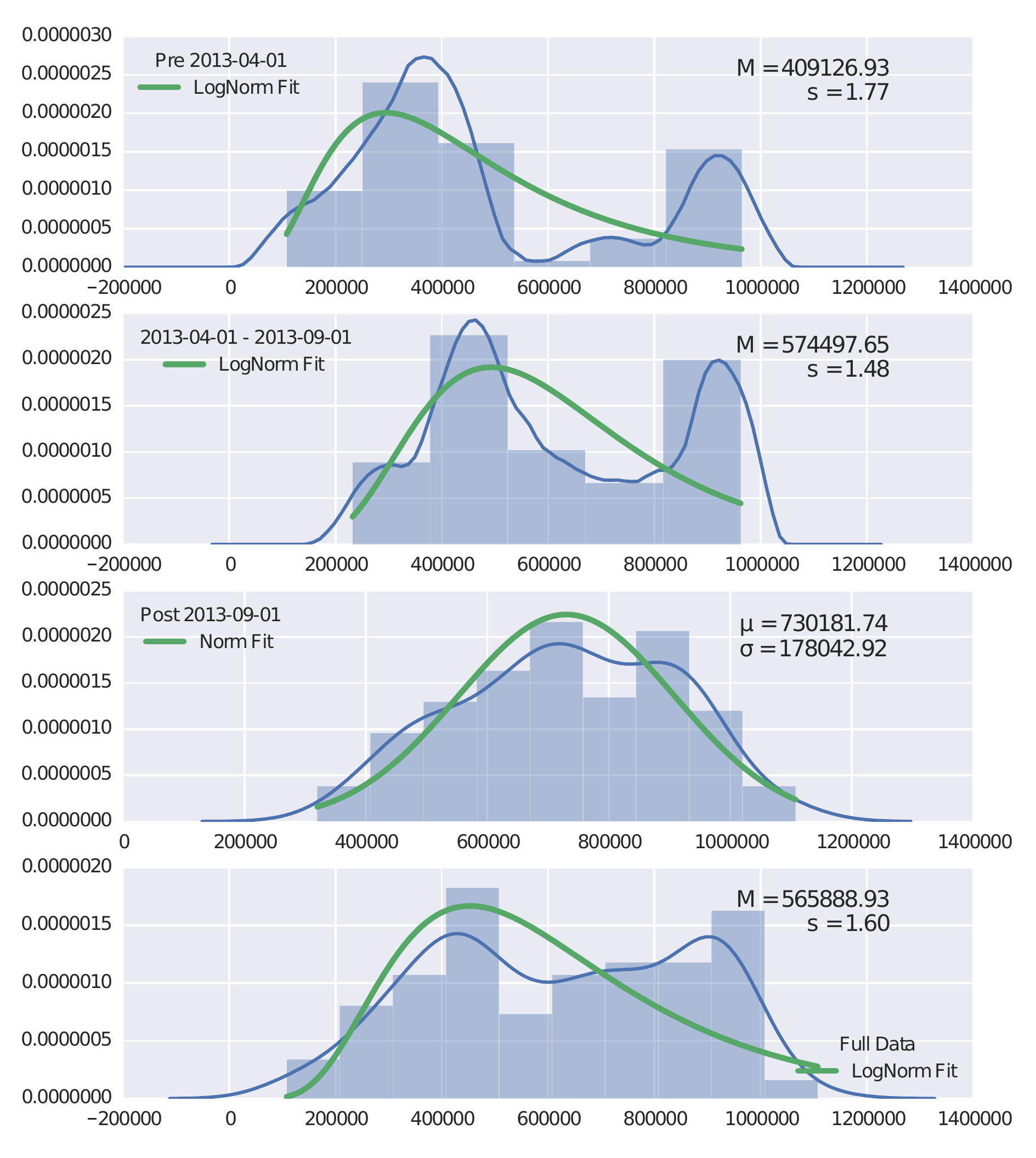}
      \caption{}
      \label{fig:keyword_fitting}
  \end{subfigure}
  \begin{subfigure}{.49\columnwidth}
          \includegraphics[width=1.0\textwidth]{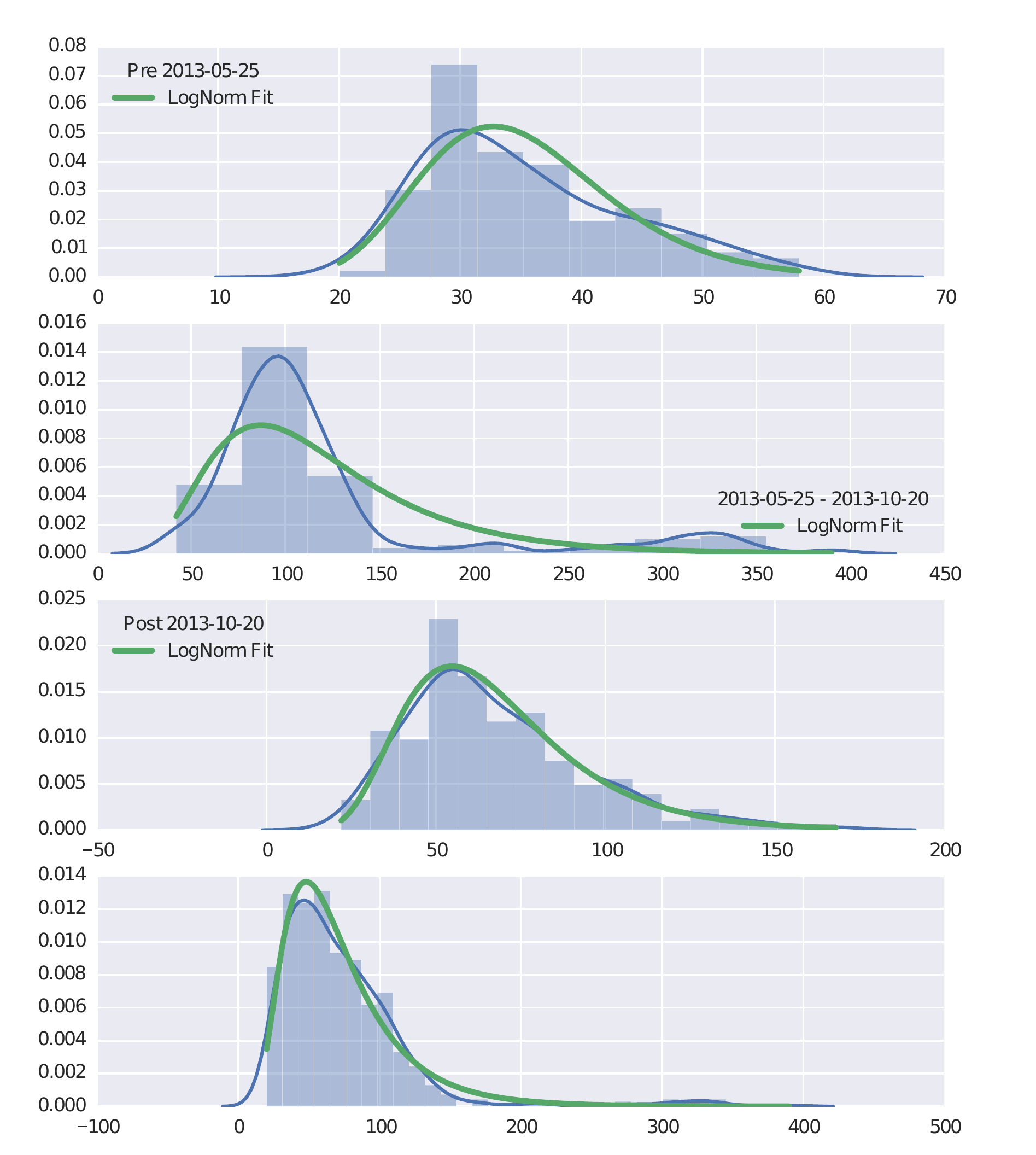}
          \caption{} 
          \label{fig:data_fitting}
  \end{subfigure}
\vspace{-10pt}
\caption{Histogram fit of (a) surrogate source (Twitter keyword counts) and 
(b) target source (Number of protests of different categories), for 
various temporal windows, under i.i.d.\ assumptions. 
These assumptions lead to satisfactory distribution fit, at a batch level,
for both sources.
The top-most row corresponds to the period before the 
Brazilian spring (pre 2013-05-25), the second row is for the period
2013-05-25 to 2013-10-20, and the third is for the period after 2013-10-20.
The last row shows the fit for the entire period.
These temporal fits are indicative of significant changes in 
distribution along the Brazilian Spring timeline, for both target
and surrogates.
\label{fig:full_fitting}
}
\end{center}
\vspace{-25pt}
\end{figure}

\vspace{-5pt}
In this section we discuss the HQCD framework for early detection of protest
uprisings via surrogate sources. Protests can happen in civil society for
various reasons such as protests against fare hike or protests demanding
more job opportunities. Such protests, especially major changes in protest 
base levels, are potentially interlinked. However explaining such interactions
is a non-trivial process. \cite{embers2014} found
several social sources, especially Twitter chatter, to capture protest related
information. We apply {\ProposedModel} to find significant changes in protests
concurrent to changes in Twitter chatter, such that detecting changes accurately
are of primary importance in contrast to the chatters which can be influenced by a range 
of factors, including protests. In general, {\ProposedModel} can be applied 
in similar events, such as disease outbreaks, to find significant changes in 
targets using information from noisy surrogates.
\vspace{-15pt}
\subsection{Hierarchical Model for Protest Count Data}
\label{ssub:Hierarchical_Model_for_Count_Data}
For the case of protest uprisings, we first note that surrogate sources such as
Twitter are in general noisy and involve a complex interplay of several
factors - one of which could be protest uprisings.  Furthermore, for protest
uprisings, we are more concerned in using the 
surrogates (Twitter chatter) to help
declare changes at target level (protest counts) 
than accurately identifying the changes in surrogates. 
Thus, without loss of generality,  we model the 
surrogates as i.i.d.\ distributed variables.
Figure~\ref{fig:full_fitting}) evaluates the i.i.d.\ assumptions, for 
both protest counts and Twitter chatter. 
Our results indicate that Log-normal is a reasonable fit for Twitter chatter. 

\noindent \textit{\underline{Surrogate Sources}}: Formally, we assume that 
the $j^{th}$ surrogate source $K_j$ is generated i.i.d.\ from a
distribution $f^{K}$ w.r.t to the associated changepoint $\Gamma_{K_j}$
as:
\begin{align}
  \label{eq:cpd_dist:surr}
k_j(t) \iid
\begin{cases}
  \begin{array}{lr}
    f^{K}(\phi^{K_j}_0) & t \leq \Gamma_{K_j} \\
    f^{K}(\phi^{K_j}_1) & t > \Gamma_{K_j}
  \end{array}
\end{cases}
\vspace{-4pt}
\end{align}
where, $\phi^{K_j}_0$  and $\phi^{K_j}_1$  are the pre- and post-change
parameters.
Following our earlier discussion, we select 
$f^{K}$ as Log-normal (with location and scale parameters
$\phi^{K_{j}} = \lbrace c^{K_{j}}, d^{K_j} \rbrace$) for Twitter counts. 

\noindent \textit{\underline{Target Sources}}: Target sources can in general be
dependent on both the past values of targets as well as the surrogates. 
Here, we restrict the target source process to be a first order Markov process.
Under this assumption, we formalize the $i^{th}$ target source $S_i$ to follow 
a Markov process $f^S_t$ w.r.t to its changepoint $\Gamma_{S_i}$ as:
\vspace{-3pt}
\begin{equation}
  \label{eq:cpd_dist:data}
s_i(t) \sim
\begin{cases}
  \begin{array}{lr}
    f^{S}_t(\phi^{S_i}_0 (t)) & t \leq \Gamma_{S_i} \\
              \vspace{1pt}
    f^{S}_t(\phi^{S_i}_1 (t)) & t > \Gamma_{S_i} \\
  \end{array}
\end{cases}
\end{equation}
where, $\phi^{S_i}_0$  and $\phi^{S_i}_1$  are the pre- and post-change
parameters of the process.
Poisson process with dynamic rate parameters has been shown~\citep{carlin1992hierarchical}
to be effective in specifying hierarchical count data w.r.t changepoints. 
Here, we model the rate parameters as a 
nested autoregressive 
process~\citep{fokianos2009poisson, carlin1992hierarchical} 
given as:
\vspace{-5pt}
\begin{equation}
  \label{eq:cpd_dist_dyn:data}
  \begin{array}{lcl}
    \phi^{S_i}_{0/1}(t) & = & \phi^{S_i}_{0/1}(t-1) + 
    \frac{A^i_{0/1}(t)}{|A^i_{0/1}(t)|} \dbinom{S(t-1)}{K(t-1)}
    + \mathcal{N}(0, \sigma_S)\\ 
    A^i_{0/1}(t) & =& A^i_{0/1}(t-1) + \mathcal{N}(0, \Sigma_{A^i})
  \end{array}
\end{equation}
Here, $\phi_{0/1}^S(t)$ captures the latent rate and $\sigma_S$ denotes the error variance.
$A_{0/1}^i(t)$ captures the variation due to the observed values of target and surrogates sources. 

\noindent  \textit{\underline{Changepoint Priors}}:
Following our prior discussion, \textit{surrogate changepoints}
can be assumed to have an uninformative prior and 
we model $\Gamma_{K_j}$ via a memoryless arrival distribution
(static probability of observing change given it hasn't occurred earlier)
as:
\vspace{-5pt}
\begin{align}
  \label{eq:cpd_prior:surr}
  \Gamma_{K_j} \sim  \text{Geom}(\rho_{K_j})  \Rightarrow  P(K_j = t | K_j \geq t) = \rho_{K_j}
\end{align}
Conversely, target changepoints can be influenced by 
surrogate changepoints as their generative process is dependent on the 
surrogates.
Specifically, whenever we observe a changepoint in the
surrogates, we assume that the base rate of changepoint for a target to increase 
for a certain period of time. Formally, \textit{target changepoint} priors are assumed to 
follow a dynamic process as:
\begin{equation}
  \label{eq:cpd_prior:data}
  \hspace{-85pt}
  \Gamma_{S_i} \sim \text{Geom}(\rho_{S_i (t)}) 
   \vspace{-5pt}
\end{equation}
\[
  \rho_{S_i}(t) = \rho_{S_i} + 
          \sum\limits_{j}\mathcal{I}(\Gamma_{K_j} < t) \mu^1_j
          e^{-\mu^2_j(t-\Gamma_{K_j})}
\vspace{-5pt}
\]
where, $\mathcal{I}$ is the indicator function. $\rho_{S_i}$ represents
the nominal base rate for the changepoint.
It can be seen, a change in the $j$th surrogate source is modeled as 
an exponentially decaying `impulse' of amplitude $\mu^1_j$.
\noindent The {\it summation of targets}, $E(t)$ is known deterministically
given $S_i(t)$. Moreover, given $S_i(t-1)$, $E(t)$ can be considered to be summation of independent 
Poisson processes following similar dynamics as  
equation~\ref{eq:cpd_dist_dyn:data} which is omitted due to limited space. 
Similarly, relationships for dependence of $\Gamma_E$ can be modeled to be dependent on $K$ similar to equation~\ref{eq:cpd_prior:data}.

\vspace{-5pt}
\subsection{Changepoint Posterior Estimation}
\label{sub:smc_for_changepoint_detection}

Algorithm~\ref{al:main} involves
posterior estimation of the changepoints given the data at a 
particular time point. 
Earlier works have focused mainly on offline methods such as 
Gibbs Sampling~\citep{carlin1992hierarchical}.
Online posterior estimation for such problems
have been studied extensively in the context of 
Sequential Bayesian Inference~\citep{casella2002statistical} such as
Kalman filters~\citep{kalman1960new, simon2010survey, anderson2001ensemble} 
(Gaussian transitions) and 
Particle Filters~\citep{del1996non, pitt1999filtering,doucet2009tutorial}.
Recently, Chopin \etal~\citep{Chopin:smc2} proposed a robust Particle 
Filter, \SMC~which is ideally suited for fitting the parameters of 
the non-linear hierarchical model described in 
Section~\ref{ssub:Hierarchical_Model_for_Count_Data}.
In this section we formulate a Sequential Bayesian Algorithm that
makes the {\ProposedModel} tractable under real world constraints
(see Figure~\ref{fig:computation}).

\begin{figure}[t!]
\centering
\includegraphics[width=0.7\columnwidth]{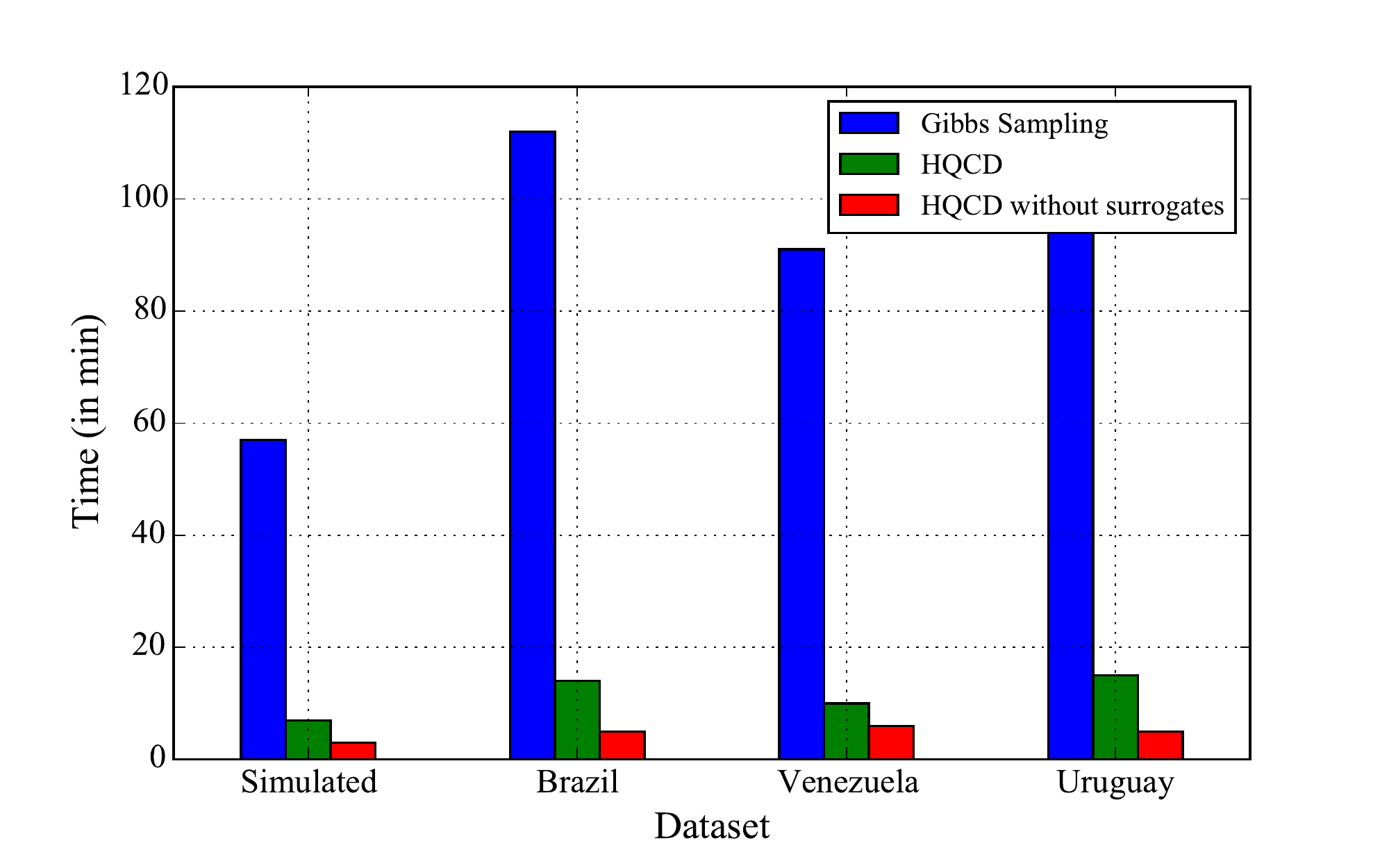}
\caption{Computation time for one complete run of changepoint detection (in mins) on
a 1.6 GHz quad core 8gb intel i5 processor:
Gibbs sampling~\citep{carlin1992hierarchical} vs \ProposedModel\ vs \ProposedModel\ without surrogates.
Gibbs sampling computation times are unsuitable for online detection.}
\label{fig:computation}
\vspace{-15pt}
\end{figure}

To find the posterior
$P\left(\vec{\Gamma}_S, \vec{\Gamma}_K, \Gamma_E | D^{(T)}\right)$ 
at any time $T$ using \SMC~we first cast the model parameters and variables 
into the following three categories:

\noindent \textit{\underline{Observations $({y_T})$}}: 
In the context of \SMC~these are the parameters that correspond to 
observed variables at each time point $T$. For \ProposedModel~we
can model $y_T$ as:
\begin{equation}
  \label{eq:smc:observation}
  {y_T} \overset{\Delta}{=} \vbrace{S(T), K(T)}
\end{equation}

\noindent \textit{\underline{Hidden States $({x_T})$}}: 
\SMC~estimates the observations based on interaction with hidden states
which are dynamic, unobserved and is sufficient to describe $y_T$ at $T$.
For \ProposedModel, we can express $x_T$ as follows:
\begin{equation}
\label{eq:smc:states}
{x_T} \overset{\Delta}{=} \{\vec{\Gamma}_S, \vec{\Gamma}_K, \Gamma_E,
\vec{\phi} _{0/1}^{S}(T-1), \vec{\phi}_{0/1}^{K},  
\vspace{-7pt}
\end{equation}
\[
\vec{\rho}_{K}(T), \vec{\rho}_{S}(T), \vec{A}_{0/1}, S(T-1), K(T-1)\}
\]

\noindent \textit{\underline{Static Parameters $({\theta})$}}: 
Finally, \SMC~also accommodates
the concept of static parameters which do not change over time such as
the base probabilities of changepoint $\vec{\rho_S}$ and 
the noise matrix $\Sigma_A$ in \ProposedModel.
We can express $\theta$ as:
\begin{equation}
  \vspace{-5pt}
  \label{eq:smc:static}
  \theta \overset{\Delta}{=} 
  \vbrace{\sigma_S, \Sigma_A, \vec{\rho_S}, \vec{\mu^1}, \vec{\mu^2}}
  \end{equation}
For a given set of such parameters, \SMC~works by first generating $N_\theta$
samples of $\theta$ using the prior distribution $P(\theta)$. 
For each of these samples of $\theta$, \SMC~samples $N_X$ samples of
$x_0$ from its prior $P(x_0|\theta)$. Following standard practices,
we use conjugate distributions~\citep{casella2002statistical}
for the priors. 

\begin{algorithm2e} 
  \footnotesize
  \SetKwInOut{Parameter}{Parameters}
  \SetKwInOut{Input}{Input}
  \SetKwInOut{Output}{Output}                             
  \SetKw{Define}{Define}
  \SetKw{Compute}{Compute}  
  \SetKw{Test}{Test}  
  \SetKw{Sample}{Sample}  
  \SetKw{Update}{Update}  
  \SetKw{ReturnUpdate}{Return Update}  
  \DontPrintSemicolon                                                              
  \Input{At time $T$, $y_T$ as give in equation~\ref{eq:smc:observation}\\
         }
      \Parameter{Prior distributions $P(\theta)$ and $P(x_0|\theta)$\\
             Hyperparameters for $P(\theta)$ and $P(x_0|\theta)$ 
             }
  \Output{joint posterior $P(\Gamma_K, \Gamma_S, \Gamma_E |D^{(T)})$}                   
  \BlankLine
  \Define $x_T$ as give in equation~\ref{eq:smc:states}\\
  \Define $\theta$ as give in equation~\ref{eq:smc:static}\\
  \BlankLine
  \tcp{Initialization}
  \Sample $N_\theta$ number of $\theta_q$ using $P(\theta)$\;
  \Sample $N_x$ number of $x_{0_{q,r}}$ using $P(x_0|\theta_q)$\;
  \Update weights $w(0)$ \tcp*{See Appendix}
  \BlankLine
  \tcp{Online Learning}
  \For{each $T$}{
    \tcp{State Updates}
    \For{each $q \in N_{\theta}$}{
       \For{each $r \in N_x$}{
       \Update States: $x_{T_{q,r}}$ from $x_{{T-1}_{q,r}}$\;
       \Compute Importance weights $w_{q,r}(T)$\;
       \Compute observation probability $P(y_T|y_{T-1}, \theta_q)$
       }
    } 
    \tcp{Incorporate observation at time $T$}
    \Update Importance weight $w_{q,r}(T) \leftarrow w_{q,r}(T)
                               P(y_T|y_{T-1}, \theta_q)$\;
    \tcp{test premature convergence}
    \Test degeneracy conditions using effective sample size\; 
    \If{degeneracy}{
       \tcp{markov kernel jumps}
       \Update $x_{T_{q,r}}$ by multiplying a markov Kernel $\mathcal{K}_T$\;
       \tcp{recomputing weights}
       exchange $x_{T_{q,r}}$ and set $w_{q_r} \propto 1$\;
    }
        \tcp{Find joints}
    \ReturnUpdate $P(\vec{\Gamma}_S, \vec{\Gamma}_K, \Gamma_E | D^{(T)})$ 
    using equation~\ref{eq:cpd_posterior}
  }
  \caption{\ProposedModel\  Changepoint Posterior estimation via \SMC
  \label{al:smc}}
\end{algorithm2e} 
At each time point  $T$, the samples are perturbed using the model equations
given in Section~\ref{ssub:Hierarchical_Model_for_Count_Data} and associated
with weights $w$ to estimate the joint posteriors as:
\begin{equation}
  \label{eq:cpd_posterior}
  \begin{array}{rcl}
  P(\theta, x_T|y_T) & = & \sum\limits_{q=1}^{N_\theta}\sum\limits_{r=_1}^{N_x} w_{q,r} \delta(\theta, x_T)\\
  P\left(\vec{\Gamma}_S, \vec{\Gamma}_K, \Gamma_E\ | D^{(T)}\right) 
   &\propto & \sum\limits_{q=1}^{N_\theta}\sum\limits_{r=_1}^{N_x} w_{q,r} 
  \delta(\vec{\Gamma}_S, \vec{\Gamma}_K, \Gamma_E)
\end{array}
\end{equation}
where, $\delta$ is the Kronecker-delta function. 
Algorithm~\ref{al:smc} outlines the steps involved in this process. For 
more details on {\SMC} see Appendix.
\vspace{-6pt}
      
\section{Experiments}
 \label{sub:experiments}

We present experimental results for both synthetic and real-world datasets,
and compare {\ProposedModel} against several state-of-the-art online change
detection methods (see Table~\ref{tb:algoComparison}), specifically,
\GLRT~\citep{siegmund1995using}, W-GLRT~\citep{lai2010sequential},
BOCPD~\citep{adams2007bayesian} and RuLSIF~\citep{Liu201372}. 
To further analyze the effects of surrogates in detecting changepoints, 
we compare against {\ProposedModel} without surrogates, 
where $K(t-1)$ is dropped 
from equation~\ref{eq:cpd_dist_dyn:data} and $\rho_{S_i(t)}$ is made static
(i.e.\ independent of changepoints from surrogates) in
equation~\ref{eq:cpd_prior:data}.

\begin{table*}[t!]
   \caption{(Synthetic data) comparing true changepoint ($\Gamma$) for targets
   against detected changepoint ($\gamma$) by \ProposedModel~against 
   state-of-the-art methods for false alarm (FA) and additive detection delay (ADD).
   Each row represent a target and best detected changepoint  is shown
   in bold whereas false alarms are shown in red.
   \label{tb:synthetic_results}}
     \centering

\begin{tabular}{@{}llllllllllllll@{}}
\toprule
& \multicolumn{1}{l}{\color{blue}{True}} & \multicolumn{2}{c}{\GLRT} & \multicolumn{2}{c}{\WGLRT} & \multicolumn{2}{c}{\BOCPD} &  \multicolumn{2}{c}{\RuLSIF} & \multicolumn{2}{c}{\color{PineGreen}{\ProposedModel}} & \multicolumn{2}{c}{\color{PineGreen}{{\ProposedModel} w/o~surr.}}\\
\cmidrule(lr){3-4}
\cmidrule(lr){5-6}
\cmidrule(lr){7-8}
\cmidrule(lr){9-10}
\cmidrule(lr){11-12}
\cmidrule(lr){13-14}

& \multicolumn{1}{l}{\color{blue}$\Gamma$} & \multicolumn{1}{c}{$\gamma$} & \multicolumn{1}{c}{\tiny{ADD}} & \multicolumn{1}{c}{$\gamma$} & \multicolumn{1}{c}{\tiny{ADD}} & \multicolumn{1}{c}{$\gamma$} & \multicolumn{1}{c}{\tiny{ADD}} & \multicolumn{1}{c}{$\gamma$} & \multicolumn{1}{c}{\tiny{ADD}} & \multicolumn{1}{c}{$\gamma$} & \multicolumn{1}{c}{\tiny{ADD}} & \multicolumn{1}{c}{$\gamma$} & \multicolumn{1}{c}{\tiny{ADD}}\\ \midrule
$S_1$      & {\color{blue}29}       & \red 7   & --  & \red 10  & --  & 13       &     & 36       & 7   & 33       & 4   & {\bf 32} & 3  \\
$S_2$      & {\color{blue}6 }       & 11       & 5   & 14       & 8   & \red 16  & 10  & 28       & 22  & {\bf 8}  & 2   & 9        & 3  \\
$S_3$      & {\color{blue}24}       & \red 7   & --  & \red 16  & --  & \red 15  &     & 29       & 5   & \red 22  & -   & 26       & 2  \\
$S_4$      & {\color{blue}26}       & \red 5   & --  & \red 11  & --  & \red 11  &     & 38       & 12  & {\bf 27} & 1   & 31       & 5  \\
$S_5$      & {\color{blue}47}       & \red 40  & --  & \red 15  & --  & \red 8   &     & \red 26  & -   & {\bf 50} & 3   & 55       & 8  \\
\bottomrule
\end{tabular}
     \end{table*}

\begin{figure*}[t]
  \centering
  \includegraphics[width=1.0\textwidth]{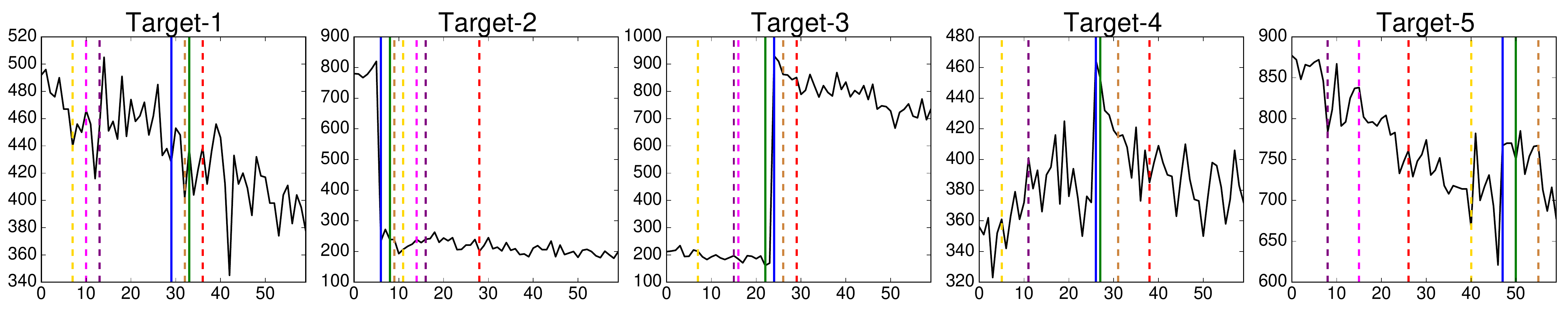}
  \caption{\label{fig:sim_data_comp}
  Comparison of \ProposedModel~against state-of-the-art on simulated target sources.
  X-axis represents time and Y-axis represents actual value.
  Solid {\color{Blue}blue} lines refer to the true changepoint, solid
  {\color{Green}green} refers to the ones detected by \ProposedModel~and
  {\color{Brown}brown} refers to \ProposedModel~without surrogates.  Dashed
  {\color{Red}red}, {\color{Magenta}magenta}, {\color{Purple}purple} 
  and {\color{Gold}gold} lines refer to changepoints detected by
  \RuLSIF, \WGLRT, \BOCPD~and \GLRT, respectively. \ProposedModel~shows
  better detection for most targets with low overall detection delay and false
  alarms. 
}
\end{figure*}

\subsection{Synthetic Data}
\label{ssub:synthetic_data}

In this section, we validate against synthetic datasets with
known changepoint parameters.
For this, we pick 5 targets ($I=5$) and 10 surrogates ($J=10$).
The surrogates were generated from i.i.d.\ Log-normal distributions (see
equation~\ref{eq:cpd_dist:surr}) while the targets were generated using Poisson
process (see equation~\ref{eq:cpd_dist:data}).  The changepoints for surrogates
were sampled from a fixed Gamma distribution (see~\ref{eq:cpd_prior:surr})
while the associated changepoints for  target sources were simulated via
equation~\ref{eq:cpd_prior:data}.

\subsubsection{Comparisons with state-of-the-art}
\label{ssub:expt:state-of-the-art}

As true changepoints are known for the synthetic dataset, we can
compare \ProposedModel~against the state-of-the-art methods for the
 detected changepoint as shown in
Figure~\ref{fig:sim_data_comp}. 
Table~\ref{tb:synthetic_results} presents the results in terms of the
false alarm (FA) and additive detection delay (ADD). From the table, we
can see that {\ProposedModel} is able to detect the changepoints with
fewer false alarms. Also \textit{{\ProposedModel} has the lowest delay} across all
methods for all targets except Target-1 for which {\ProposedModel} without
surrogates achieved better delay indicating the surrogates are
not informative for this target source.

\vspace{-2pt}
\subsubsection{Usefulness of Surrogates}
\label{ssub:Usefulness_of_Surrogates}

Our comparisons with the state-of-the-art shows significant improvements that
were achieved by {\ProposedModel}, both in terms of FA and ADD 
and \textit{showcase the importance of systematically admitting surrogate information} to 
attain a quicker change detection with low false alarm. 
We compare {\ProposedModel} with surrogates against
{\ProposedModel} without surrogates (Table~\ref{tb:synthetic_results})
and find that admitting surrogates significantly improves average delay ($2.5$ compared to $4.2$).
We also plot the average false alarm rate against the detection
delay in Figure~\ref{fig:fa_vs_delay} and find that {\ProposedModel} results
are in general the ones with the best tradeoff between FA and ADD. 

      \begin{figure}[t]
  \centering
  \includegraphics[width=0.8\columnwidth]{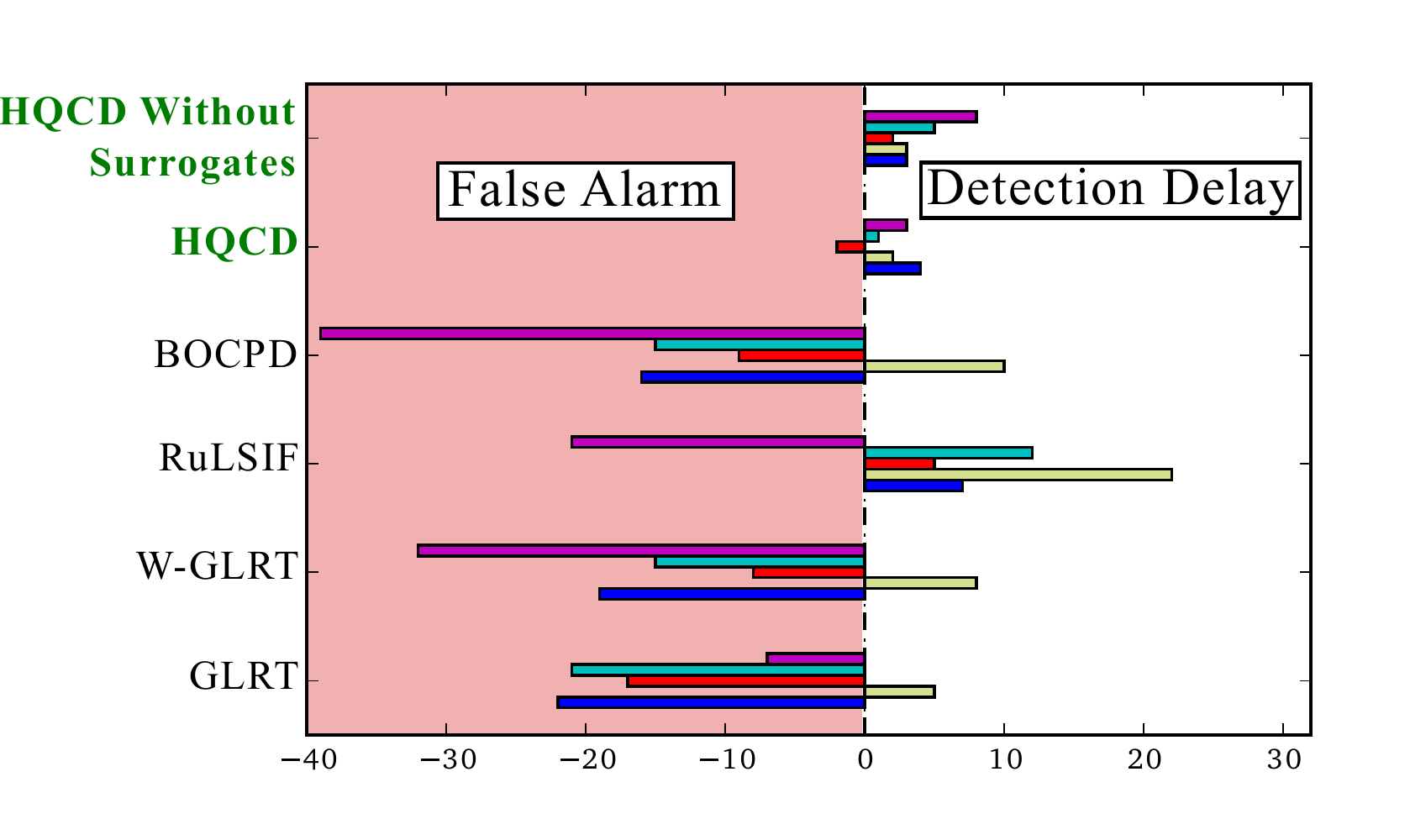}
  \caption{\label{fig:fa_vs_delay}False Alarm vs Delay trade-off for
  different methods. \ProposedModel~shows the best trade-off.}
      \end{figure}
\begin{figure}[h]
  \begin{center}
    \begin{subfigure}{.49\columnwidth}
        \includegraphics[width=1.0\textwidth]{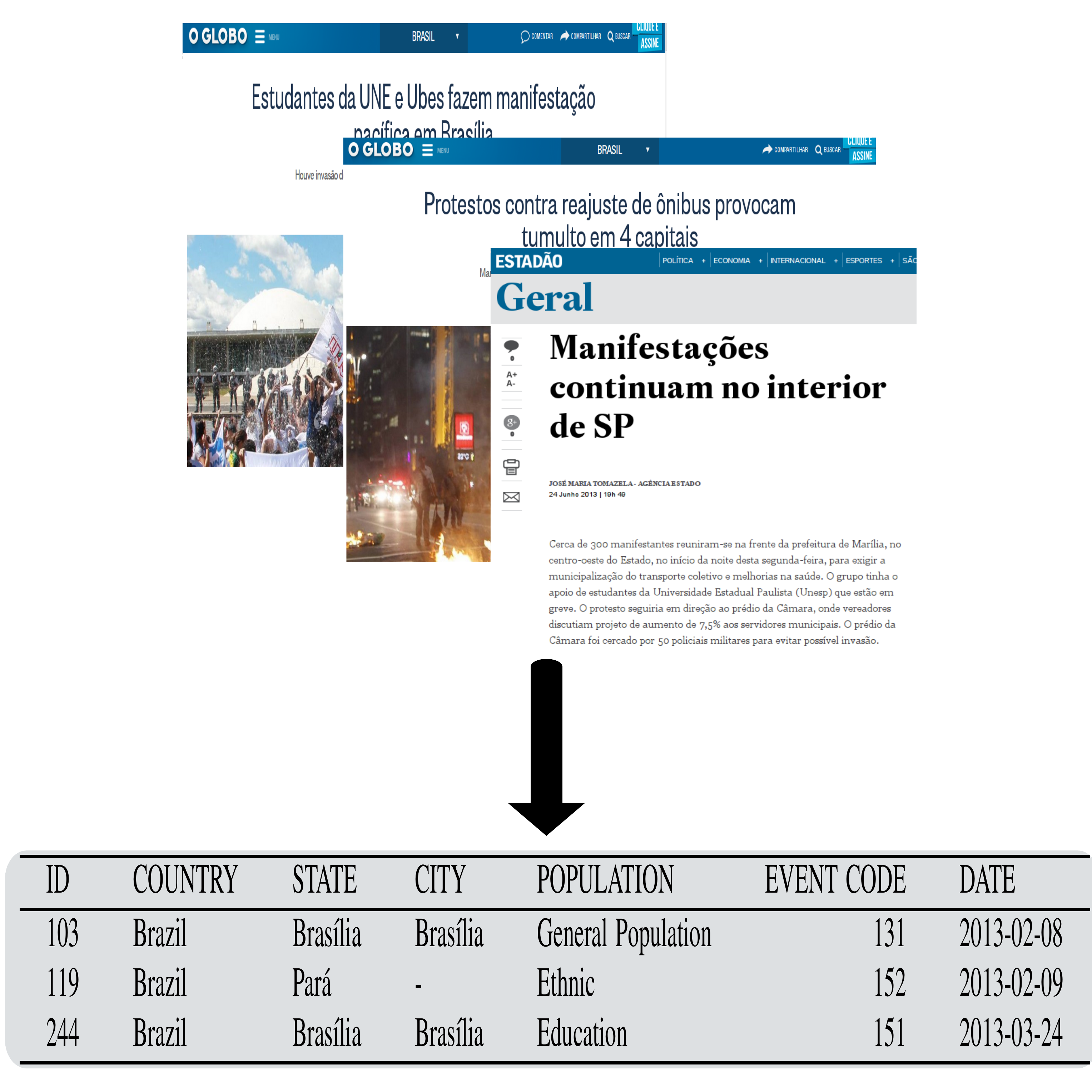}
        \caption{Civil Unrest Protests
                \label{fig:data_gsr}}
    \end{subfigure}
    \begin{subfigure}{.49\columnwidth}
            \includegraphics[width=1.0\textwidth]{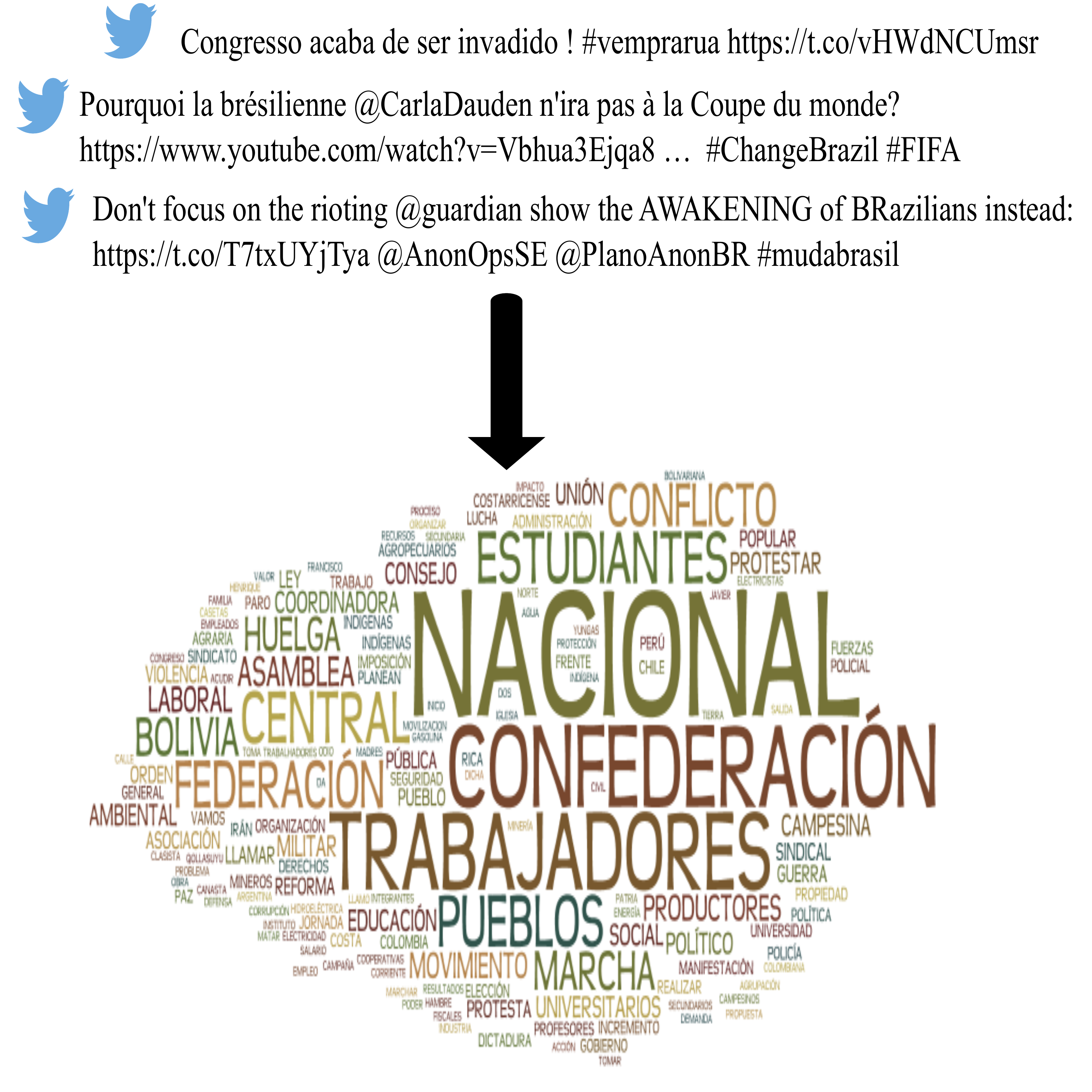}
            \caption{Twitter chatter} 
            \label{fig:data_twitter}
    \end{subfigure}
                                \caption{
    Illustration of civil unrest data: (a) shows an example where relevant news
    articles (top) are scanned to produce an annotated dataset of protest
    activities.  (b) Geo-fenced Twitter data(top). Twitter chatter can uncover
    various socio-political factors, some of which could be civil unrest
    events(bottom).\label{fig:data}
    }
\end{center}
\end{figure}

\subsection{Analysis of Protest Uprisings}
\textit{In real-life scenarios, the true changepoint is typically unknown}.
One representative example could be seen w.r.t. the onset of major 
civil unrest related protests and uprisings.  
We present a detailed analysis of three major uprisings: (i) in Brazil around
mid 2013 (often termed as the \textit{Brazilian Spring}), (ii) in Venezuela
around early 2014 and, (iii) in Uruguay around late 2013.
We first describe the data collection procedure (Figure~\ref{fig:data})
and followup with a comparative analysis of detected changepoints.

Weekly counts of civil unrest events from Nov. 2012 to Dec. 2014 were obtained as part of a database of 
discrete unrest events (Gold Standard Report - GSR) prepared
by human analysts by parsing news articles for civil unrest content.
Among other annotations, the GSR also classifies each event to one of 6
possible event types based on the reason (`why') behind the protest. Each 
of these event types such as a) \textit{Employment and Wages}, 
b) \textit{Housing}, c) \textit{Energy and Resources},
d) \textit{Other government}, e) \textit{Other economic} and f) \textit{Other}, 
bears certain societal importance. We treat the weekly counts of each of 
these event-types as target sources ($S$) and the sum total of all protests for a week
as the sum-of-targets ($E$). We also 
collected geo-fenced tweets for each country over the same time-period.
We used a human-annotated dictionary of 962 such keywords/phrases that contains
several identifiers of protest in the languages spoken in the countries of
interest (similar to Ramakrishnan et.al.~\citep{embers2014}).
As most of these keywords could have similar trends, we cluster them using
k-means into 30 clusters (i.e., we have $J=30$ surrogates). To account for 
scaling effects while preserving temporal coherence, each keyword
time-series was normalized to zero-mean and unit variance. 

\begin{table*}[t!]
\centering  
\caption{(Protest uprisings) Comparison of {\ProposedModel} vs state-of-the-art
with respect to detected changepoints}
\label{tb:cu_results}
\smaller
\begin{tabular}{@{}llllllll@{}}
\toprule
                                  &   Event-Type         & \GLRT      & \WGLRT     & \BOCPD       & \RuLSIF       & \multicolumn{2}{c}{\ProposedModel} \\
\cmidrule{7-8}
                                 &                      & $\gamma$   & $\gamma$   & $\gamma$    & $\gamma$     &  $\gamma$ & \tiny{EADD} \\
\midrule

Brazil
                                 & Employment \& Wages & 02/10      & 03/17      & 06/16       & 05/26        & 08/18     & 4    \\
                                 & Energy \& Resources  & 02/10      & 03/17      & 06/09       & 05/19        & 06/02     & 6    \\
                                 & Housing              & 03/24      & 03/31      & 07/28       & 05/19        & 06/16     & 8    \\
                                 & Other Economic       & 03/24      & 03/24      & 06/23       & 05/19        & 06/30     & 5    \\
                                 & Other Government     & 02/17      & 06/23      & 04/07       & 05/19        & 06/16     & 4    \\
                                 & Other                & 03/03      & 03/17      & 06/30       & 05/19        & 06/23     & 6    \\
                                 & All                  & 02/17      & 04/28      & 05/19       & 06/16        & 06/16     & 8    \\

\\
Venezuela                        & Employment \& Wages & 01/14      & 01/13      & 01/28       & 01/25        & 01/27     & 3    \\
                                 & Energy \& Resources  & 01/20      & 01/11      & 02/28       & 01/20        & 02/24     & 7    \\
                                 & Housing              & -          & -          & -           & -            &   -       & -    \\
                                 & Other Economic       & 01/31      & 01/31      & 01/28       & -            & 01/27     & 9    \\
                                 & Other Government     & 01/22      & 01/11      & 02/03       & 01/20        & 02/10     & 4    \\
                                 & Other                & 01/14      & 01/12      & 01/25       & 01/30        & 01/24     & 5    \\
                                 & All                  & 01/26      & 01/11      & 01/30       & 01/20        & 02/12     & 3    \\

\\
Uruguay                          & Employment \& Wages & 12/06      & 12/08      & 12/13       & 12/03        & 12/10     & 3    \\
                                 & Energy \& Resources  & 12/04      & 12/05      & 12/10       & -            & 12/09     & 4    \\
                                 & Housing              & 12/21      & 12/06      & 11/30       & -            & 11/28     & 2    \\
                                 & Other Economic       & 12/20      & 12/06      & -           & -            & 11/26     & 2    \\
                                 & Other Government     & 11/25      & 12/05      & 12/16       & 11/29        & 12/15     & 3    \\
                                 & Other                & 12/05      & 12/09      & 12/03       & -            & 01/14     & 10   \\
                                 & All                  & 12/05      & 12/09      & 12/03       & 11/29        & 12/10     & 3    \\

\bottomrule
\end{tabular}
\end{table*}

\begin{figure*}[!t]
 \begin{center}
                                                                                      
          \begin{subfigure}{0.32\textwidth}
       \centering
         \includegraphics[width=0.9\textwidth]{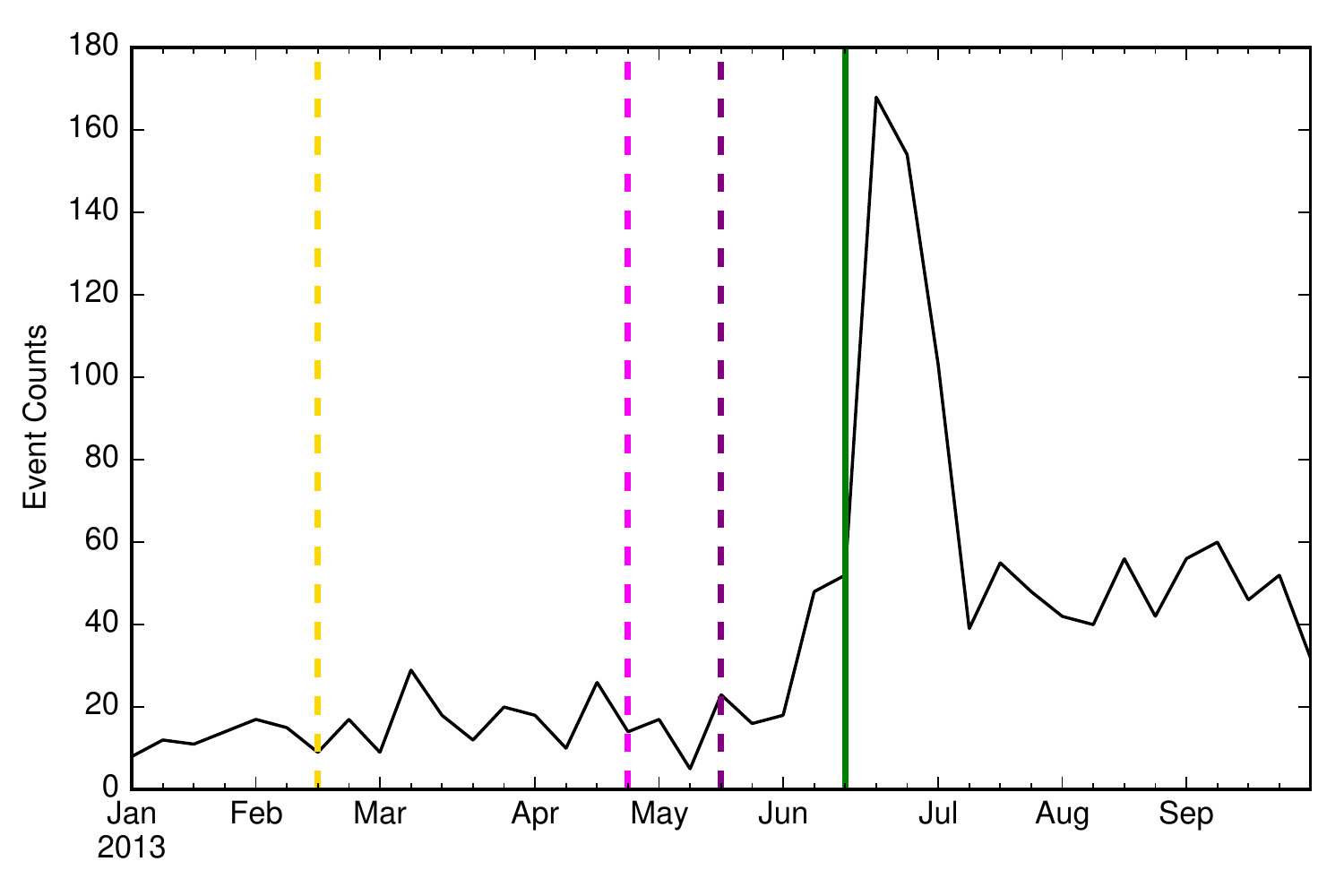}
         \vspace{-.75em}
        \caption{Brazil Total Protests\label{fig:cu_comparison:br_full}}
     \end{subfigure}     \begin{subfigure}{0.32\textwidth}
       \centering
         \includegraphics[width=0.9\textwidth]{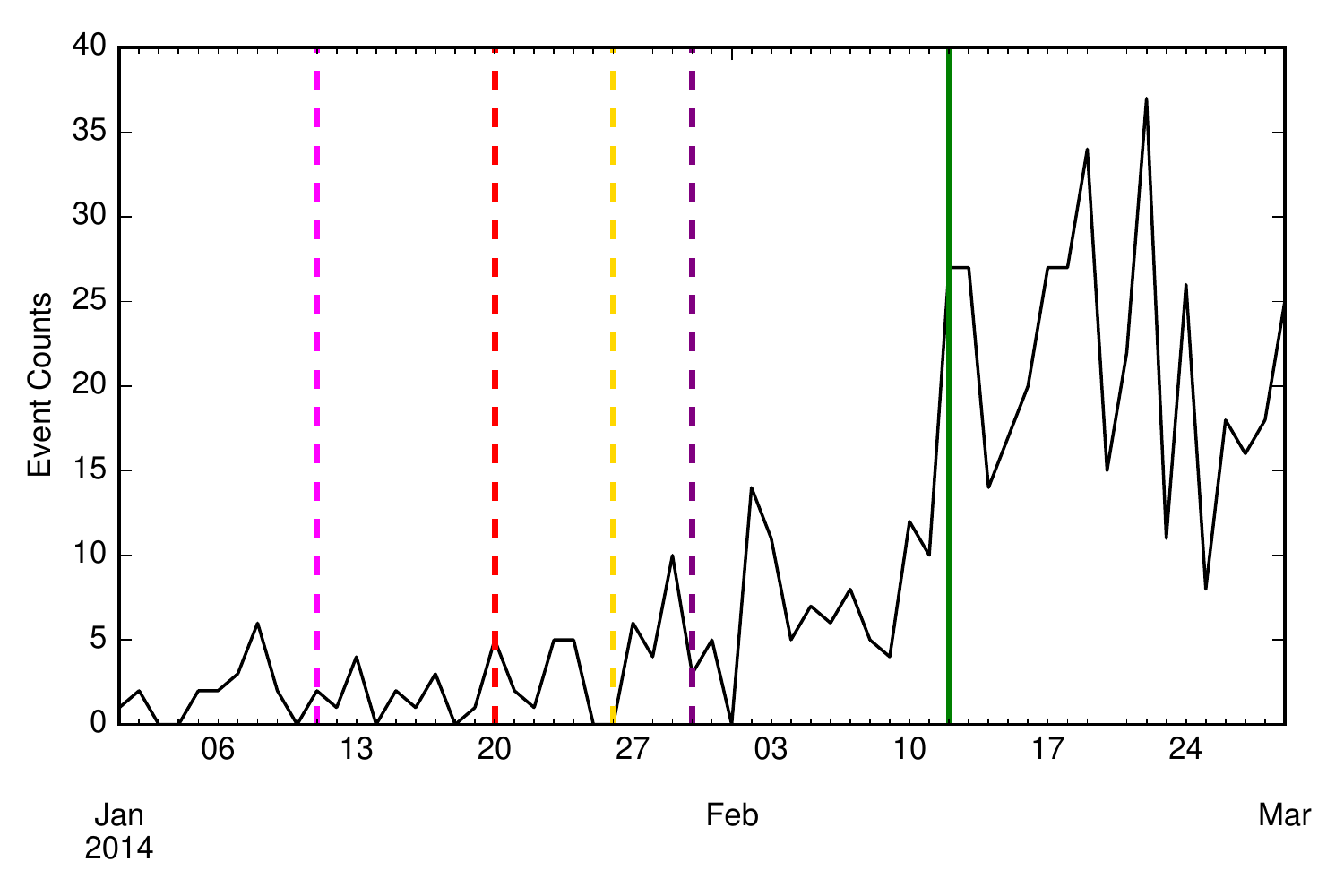}
         \vspace{-.75em}
        \caption{Venezuela Total Protests\label{fig:cu_comparison:ve_full}}
     \end{subfigure}
     \begin{subfigure}{0.32\textwidth}
       \centering
        \includegraphics[width=0.9\textwidth]{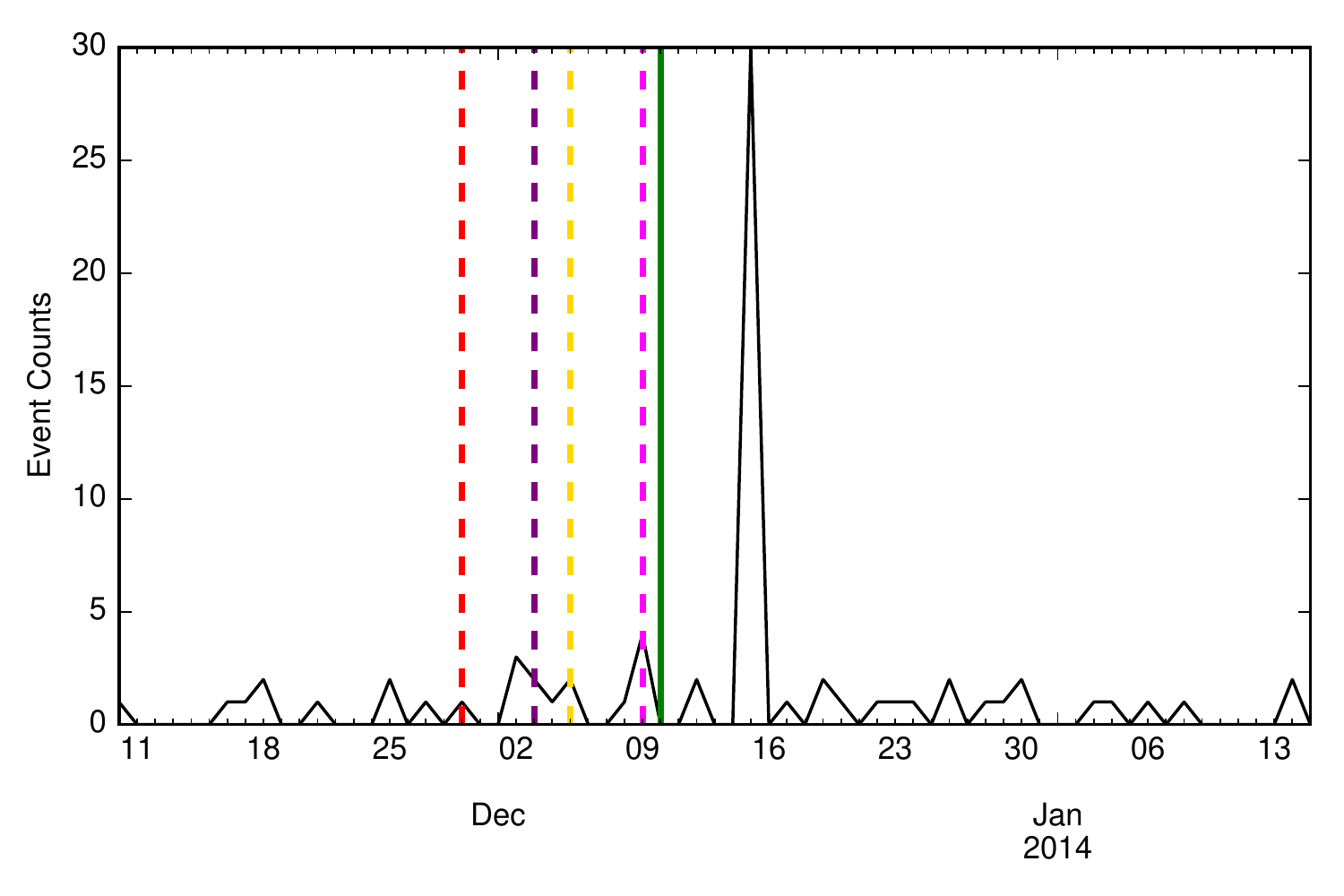}
         \vspace{-.75em}
        \caption{Uruguay Total Protests\label{fig:cu_comparison:ur_full}}
     \end{subfigure}

  \end{center}
\caption{Comparison of detected changepoints at the sum-of-targets 
(all Protests). \ProposedModel~detections are shown in
solid {\color{green}green} while those from the state-of-the-art methods i.e. 
\RuLSIF~({\color{red}red}), \WGLRT~({\color{magenta}magenta}), 
\BOCPD~({\color{Purple}purple}) and \GLRT~({\color{Gold}gold}) are shown with 
dashed lines. \ProposedModel~detection is the closest to the traditional
start date of Mass Protests in the three countries studied .
\label{fig:cu_meta}}
\end{figure*}

\begin{figure*}[t!]
  \begin{center}
    \begin{subfigure}{0.31\textwidth}
      \centering
      \includegraphics[width=0.9\textwidth]{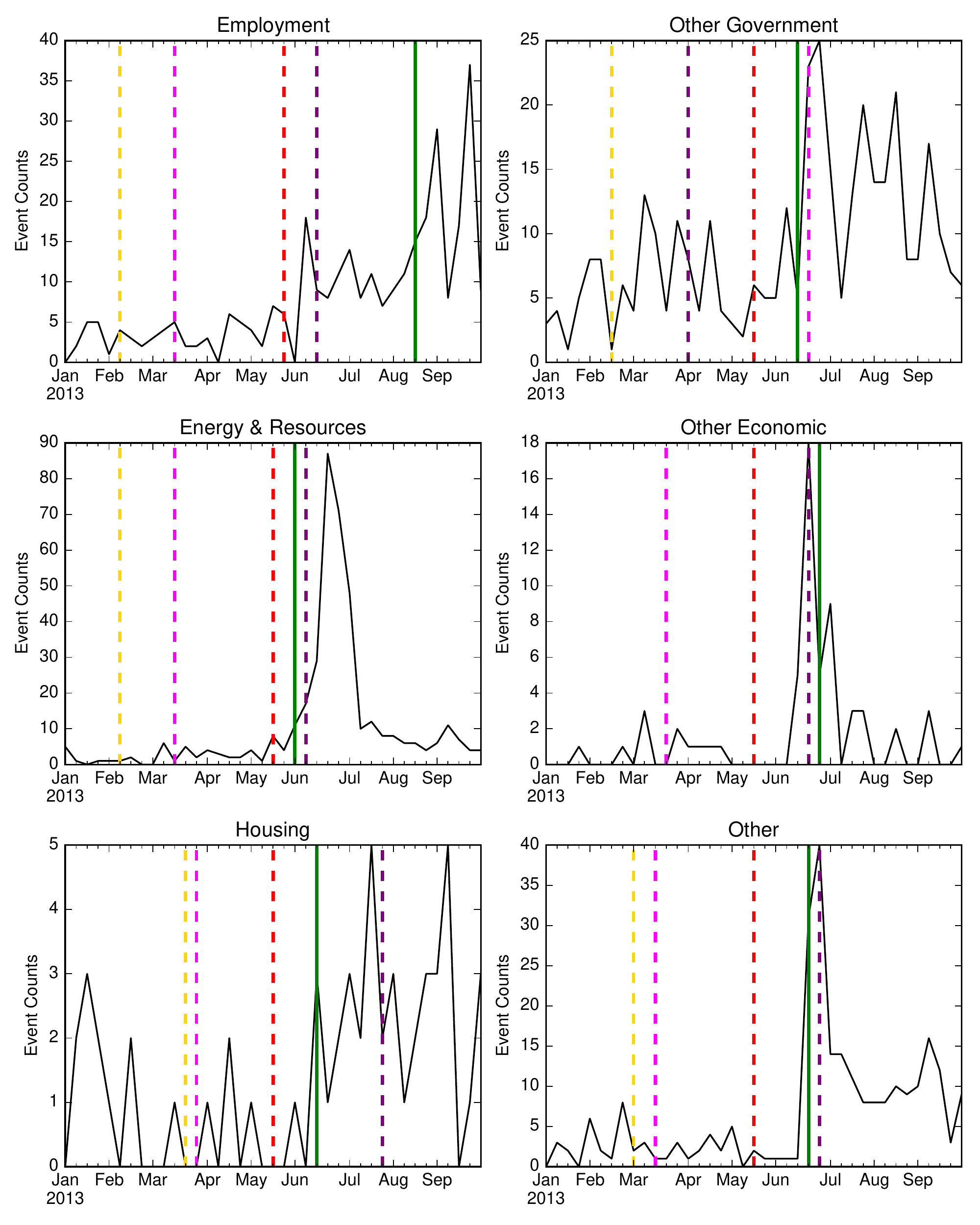}
      \caption{Brazil Subtypes\label{fig:cu_comparison:br_sub}}
    \end{subfigure}    \begin{subfigure}{0.31\textwidth}
      \centering
      \includegraphics[width=0.9\textwidth]{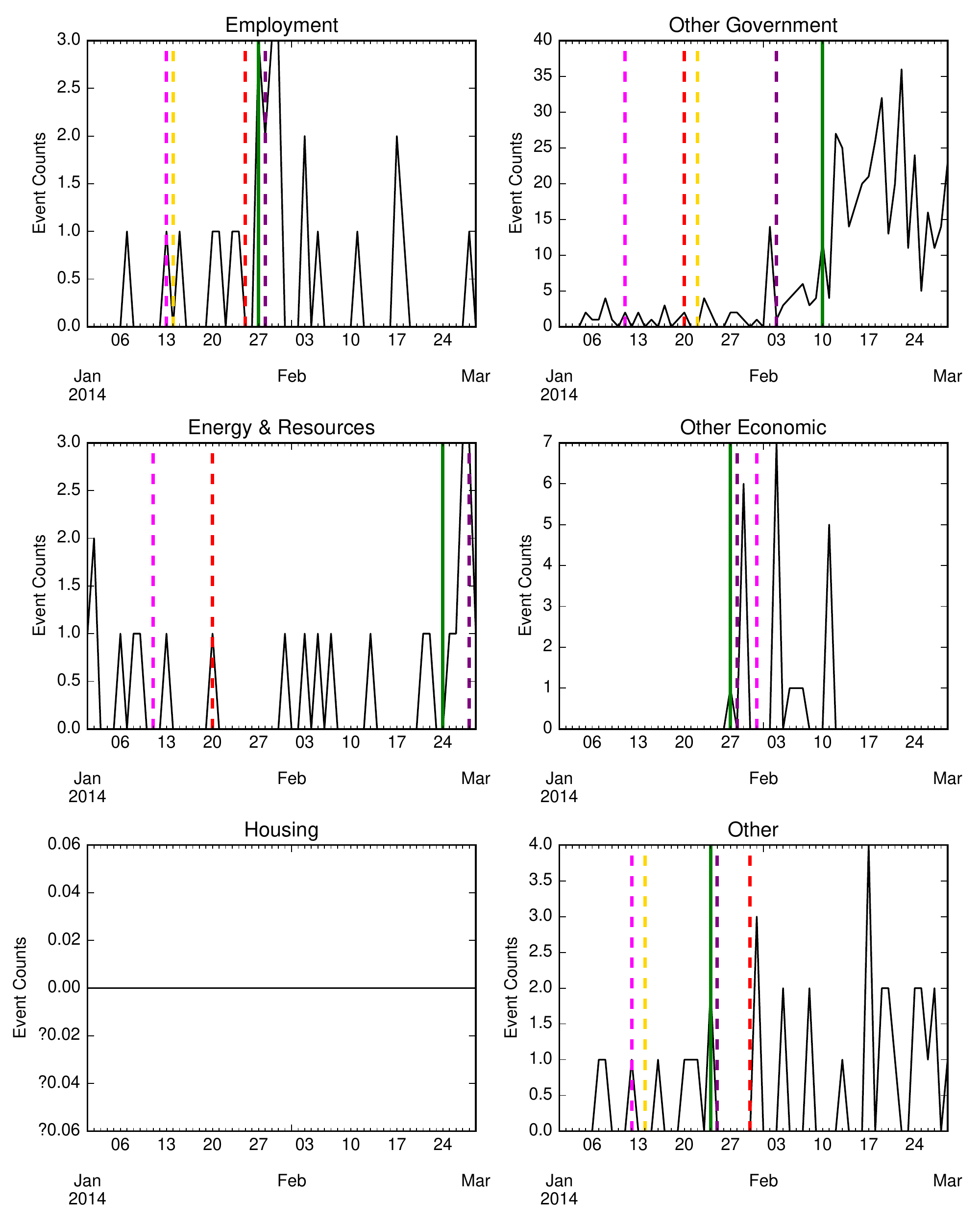}
            \caption{Venezuela Subtypes\label{fig:cu_comparison:ven_sub}}
    \end{subfigure}    \begin{subfigure}{0.31\textwidth}
      \centering
      \includegraphics[width=0.9\textwidth]{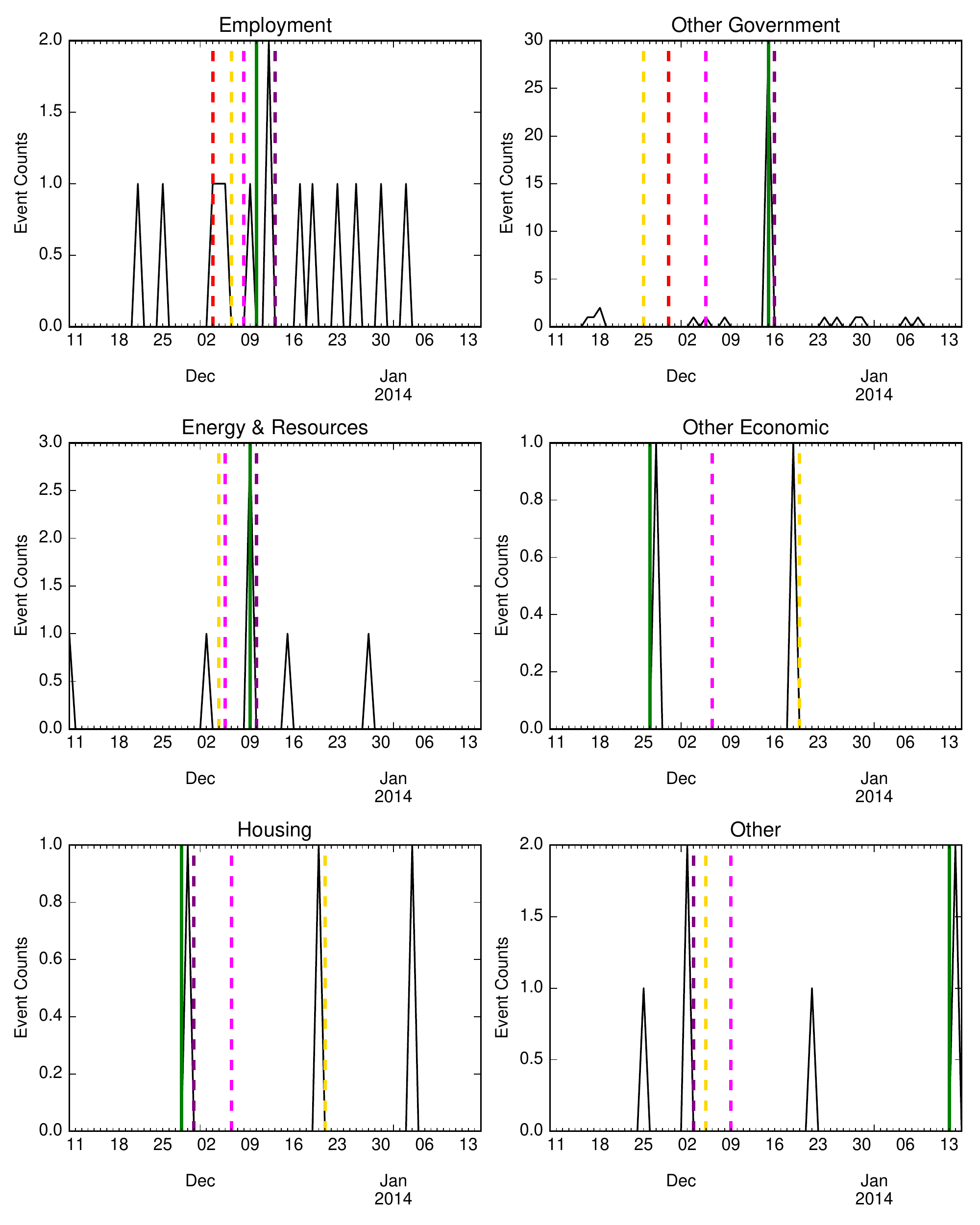}
      \caption{Uruguay Subtypes\label{fig:cu_comparison:ur_sub}}
    \end{subfigure}  \end{center}
\vspace{-15pt}
\caption{Comparison of detected changepoints at the target sources (Protest types)
\ProposedModel~detections are shown in
solid {\color{green}green} while those from the state-of-the-art methods i.e. 
\RuLSIF~({\color{red}red}), \WGLRT~({\color{magenta}magenta}), 
\BOCPD~({\color{Purple}purple}) and \GLRT~({\color{Gold}gold}) are shown with 
dashed lines. 
\label{fig:cu_meta:ap}}
\vspace{-1em}
\end{figure*}

\subsubsection{Changepoint Across layers}
\label{ssub:cu_changepoints}
We show the changepoints detected by
\ProposedModel~(bold green) and the state-of-the-art methods (dashed lines) 
for the sum-of-all protests in Figure~\ref{fig:cu_meta} and 
individual protest types in Figure~\ref{fig:cu_meta:ap}.
We can observe that \ProposedModel, which uses the
surrogate information sources and exploits the hierarchical structure, finds
indicators of changes which are visually better as well as 
more aligned to the dates of major events (See 
demo at \url{https://prithwi.github.io/hqcd_supplementary}).
In contrast, the state-of-the-art
methods can be argued to show significantly high false alarm rate. 
For such real world data sources, the notion of a \textit{true} changepoint is
difficult to ascertain, we can instead consider for example the onset of
\textit{Brazilian spring} protests (2013-06-01) as an underlying changepoint to
compare at the sum-of-targets and interpret notions of false alarm.
Table~\ref{tb:cu_results} tabulates these inferences for the targets as well as
the sum-of-targets. Although, a true changepoint is unknown, we note that for
\ProposedModel,~the expected additive detection delay (EADD) can be estimated
according to equation~\ref{EADD} (from $P(\vec{\Gamma}|D^{(T)})$ in
Algorithm~\ref{al:smc}).

                                  \begin{figure}
    \begin{subfigure}{0.9\columnwidth}
  \centering
    \includegraphics[width=0.9\columnwidth]{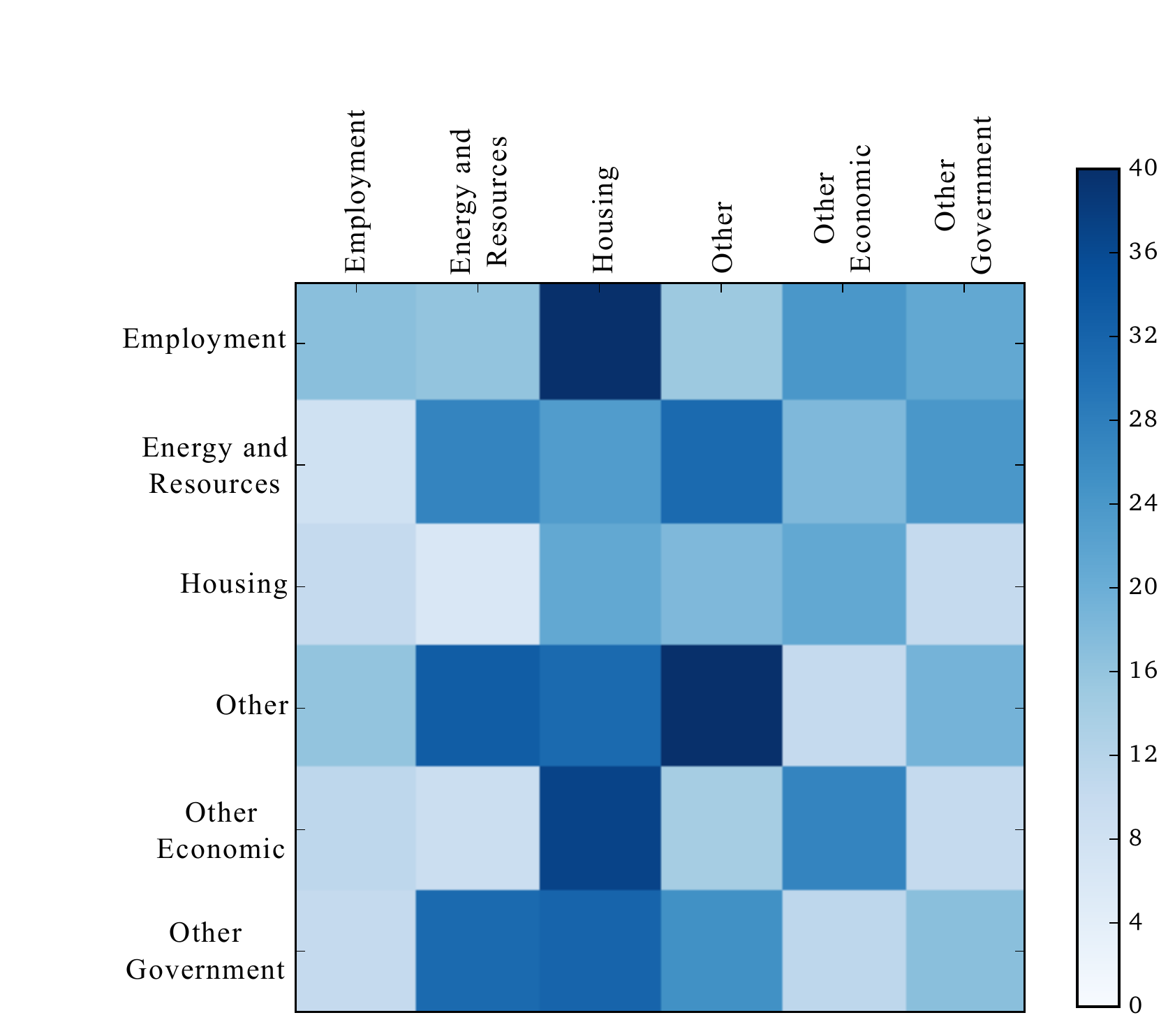}
    \caption{Influence of lagged targets on current targets}
    \label{fig:cu_heatmap:target}
\end{subfigure}\\
\begin{subfigure}{0.9\columnwidth}
  \centering
    \includegraphics[width=1\columnwidth]{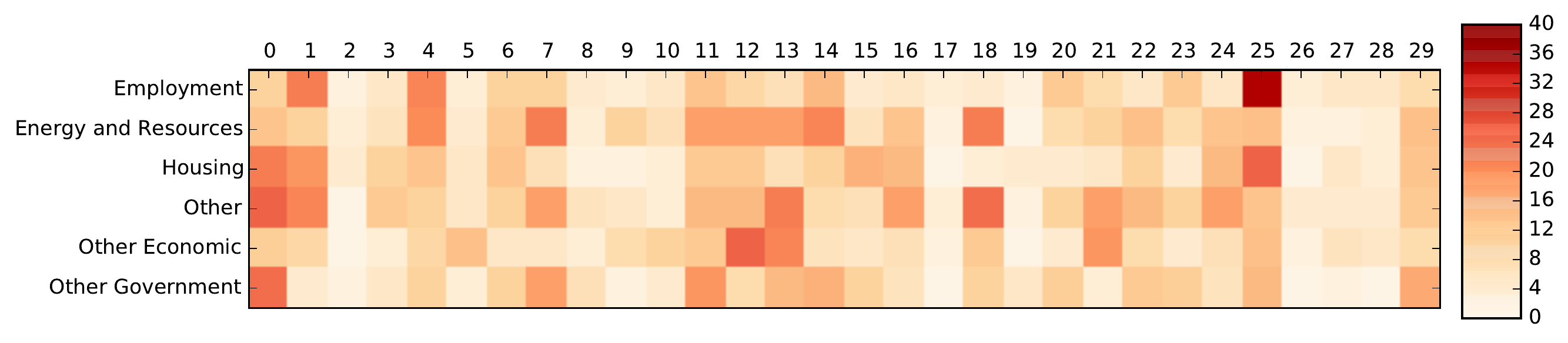}
    \vspace{-2em}
    \caption{Influence of lagged surrogates on current targets}
    \label{fig:cu_heatmap:surr}
\end{subfigure}
\vspace{-7pt}
\caption{(Brazilian Spring) Heatmap of changepoint influences of  targets on
targets (a); and surrogates on targets (b). Darker (lighter) shades indicate
higher (lesser) changepoint influence. (a) shows presence of strong
off-diagonal elements indicating strong cross-target changepoint information.
(b) shows a mixture of uninformative and informative surrogates.
\label{fig:cu_heatmap}}
\vspace{-20pt}
\end{figure}

\subsubsection{Changepoint influence analysis}
\label{ssub:causalaity}
The experiments presented in the previous section can
be further analyzed to ascertain the nature of 
progression of significant events that lead to a protest. Here
we present our analysis for \textit{Brazilian Spring}.
As a preliminary step, from Table~\ref{tb:cu_results} we can see that the
detected changepoints for Brazil reveal an interesting progression -
significant changes in Energy related unrests (06/02) propagated to
Housing/Other Govt. unrests (06/16) and culminated in mass Employment related
unrests (08/18). Interestingly, we can analyze the fitted 
parameters of the weight vector $A^i_{0/1}$ of the rate updates 
(see~\ref{eq:cpd_dist_dyn:data}) to quantize the changepoint influence of a 
source (target/surrogate) at time $T-1$ to time $T$. For each target $S_i$, we can compute the average
value of the weight vector component of each target/surrogate separately. 
Let $h_{0}$ and $h_{1}$ denote these averages for one such source. Effectively,
$h_{0}$ then measures the effect of the source at time
$t-1$ on $S_i$ at $t$ before
change while $h_{1}$ captures the same post change. Their percentage
relative change 
can then be used as a measure of the changepoint
influence of a particular target/surrogate source on $S_i$.
We plot a heatmap of these percentages in Figure~\ref{fig:cu_heatmap} for both
targets and surrogates, separately.
From Figure~\ref{fig:cu_heatmap:target}, we can see that 
`Other Economic' and 'Employment' related protests 
had strong influences from `Housing' related protests.
Furthermore, from Figure~\ref{fig:cu_heatmap:surr} 
we can see `Housing' and `Employment' related protests were influenced
by similar Twitter chatter clusters (cluster-01 and cluster-26) - indicating
that the interaction between these protest subtypes can be inferred from social
domain. Conversely, `Housing' and `Other Economic' related protests are only
weakly correlated through Twitter chatters - thus exhibiting the robustness
of {\ProposedModel} which can still detect interactions between targets
when surrogates fail to explain the same.
In general, for a particular target we can see  
linked~\textit{pre-cursors in other targets} (strong 
off-diagonal elements in Figure~\ref{fig:cu_heatmap:target}) and
highly specific
\textit{informative surrogates} (few strong cells for a row in 
Figure~\ref{fig:cu_heatmap:surr}).

\section{Conclusion}

We have presented \ProposedModel, a framework for online change detection in
multiple data sources which can augment additional surrogate information
sources in a hierarchical manner. Key properties of
our framework are the following a) it is computationally inexpensive requiring
a search over local change points (for each data layer) making it applicable
for a large number of data/surrogate sources, b) the change detection
algorithms are readily tunable to account for different false alarm
requirements at different data layers, and c) it provides a systematic approach
to integrate surrogate information for the same. As shown through a variety of
experiments on both synthetic and real world data sets, the proposed approach
uncovers interesting relationships and 
significantly outperforms state-of-the-art methods which do not account for
surrogate information both in terms of event detection reliability (probability
of false alarm) as well as the delay in detection.

\vspace{1em}
\begin{normalsize}
\noindent
{\textbf{Supporting Information} A demo of {\ProposedModel} and 
  the datasets used in this
  paper can be found in \url{https://prithwi.github.io/hqcd_supplementary}.
  Attached appendix provides additional details on {\SMC}.
}
\end{normalsize}

\begin{normalsize}
\noindent
{\textbf{Acknowledgements}
Supported by the Intelligence Advanced Research Projects Activity       
(IARPA) via Department of Interior National Business Center (DoI/NBC)   
contract number D12PC000337, the US Government is authorized to         
reproduce and distribute reprints of this work for Governmental         
purposes notwithstanding any copyright annotation thereon.              
Disclaimer: The views and conclusions contained herein are those of     
the authors and should not be interpreted as necessarily representing   
the official policies or endorsements, either expressed or implied, of  
IARPA, DoI/NBC, or the US Government.                                   
}
\end{normalsize}

\begin{appendix}
\section{Sequential Bayesian Inference}
  \label{ap:sbi}
Consider a stochastic process where an observed temporal data sequence
$\vec{y} = \vbrace{y_1, y_2, \ldots , y_t}$
depends on unobserved latent states
$\vec{x} = \vbrace{x_1, x_2, \ldots, x_t}$ 
such that the following formulation holds:
\begin{equation}
  \label{eq:smc2:model}
  \begin{array}{rcl}
    P(y_{t}|y_{1:t-1}, x_{1: t}, \theta) & = & f_{\theta}(y_{t} | x_{t}) \\
    P(x_{t}|x_{1:t-1}, \theta) & = & g_{\theta}(x_{t}|x_{t-1}) \\
    P(x_1|\theta) & = & \mu_{\theta}(x_1) \\
    \Pi_0(\theta) & = & P(\theta)\\    
  \end{array}
\end{equation}
i.e. $y_t$ depends only on the current estimate of the state $x_t$.  On the
other hand, $x_t$ depends only on $x_{t-1}$, thus exhibiting a first-order
Markov property.  $\theta$ denotes the set of parameter for the described
process which are constant over time. For some $\theta$, $f_\theta$, $g_\theta$
describe the observation probability and the state transition probability,
respectively.  $P(\theta)$ is the prior distribution for the static parameter
$\theta$ while $\mu_{\theta}$ is the same for $x$ given a particular $\theta$.
Typically, at any time point $t-1$ the observation values are known but the
latent states and the parameter $\theta$ are unknown. The problem of interest
is then to \textit{estimate the posterior probability} 
\begin{align}
P_\theta(\vbrace{x_1, x_2, \ldots, x_{t-1}} | \vbrace{y_1, y_2, \ldots, y_{t-1}})\nonumber
\end{align}
This problem has been studied extensively in the context of Sequential Bayesian
Inference~\citep{casella2002statistical}.
Kalman filters~\citep{kalman1960new}, a class of such
algorithms, are very popular
when $f_\theta$ and $g_\theta$ describe linear Gaussian transitions. There have been
efforts~\citep{simon2010survey, anderson2001ensemble}
at relaxing these restrictions  using 
methods such as Taylor series expansion and ensemble averages. However, for
arbitrary forms of $f_\theta \mbox{ and } g_\theta$, Sequential Monte Carlo and more specifically
Particle Filters are more popular. 
Particle Filters~\citep{del1996non} estimate the posteriors using a large number of Monte Carlo samples from the
observation and state transition models. At any time $t$, these algorithms only
need to draw new samples for time $t$ using data from only $t-1$. Thus these
methods are ideally suited for online learning. 
Standard Particle Filters are known to suffer from premature convergence 
(particle degeneracy)~\citep{doucet2009tutorial} or unsuitable 
for unknown static variables~\citep{pitt1999filtering,doucet2009tutorial}
Recently, Chopin \etal~\citep{Chopin:smc2} proposed a hybrid Particle filter
which interleaves Iterated batch resampling with particle filter updates to
handle both static and state parameters. Given an observed sequence $y_{1:t}$,
\SMC~can be used to find the best posterior fit of the static and state
parameters as given below:
\[P\left(\phi, \{x_{1:t}\}^{\phi}\ |\ y_{1:t}\right)\]

\subsection{\SMC~algorithm traces}
We present the traces of the \SMC~algorithm below.
For a more detailed treatment of the same
(including theoretical proofs of convergence) we ask
the readers to refer to~\citep{Chopin:smc2}.

\SMC~typically starts with two parameters: (a) $N_\theta$ - the number of
static parameters sampled from the prior of $\theta$ and (b) $N_x$ - the number
of particles of initialized for each $\theta$.

Then the Algorithm can be given as follows:

\begin{enumerate}
  \item Sample $N_\theta$ number of $\theta^{m} \sim P(\theta)$
  \item $\forall \theta^{m}$ run the following particle filter 
    \begin{enumerate}
      \item \texttt{Initialization: }$t=1$ 
        \begin{enumerate}
          \item $x_1^{1:N_x, m} \sim \mu_{\theta^{m}}$
          \item $w_{1, \theta}(x_1^n, m) = \frac{\mu_{1,\theta^m}(x_1^{n,m})g_\theta(y_1|x_1^{n,m}}{q_{1, \theta}(x_1^{n,m)}} $
          \item $W_{1, \theta}^{n,m} = \frac{w_{1, \theta}(x_1{n, m})}{\sum_i w_{1, \theta}(x_1{i, m})}$
          \item $P(y | \theta^{m}) = \frac{1}{N_x} \sum \limits_{n=1}^{N_x} w_{1, \theta}(x_1^{n,m})$
        \end{enumerate}
      \item $t \geq 1$
        \begin{enumerate}
          \item \texttt{Auxiliary variable: \\}$a_{t-1}^{n,m} \sim Multinomial\left( W_{t-1, \theta}^{1:N_x, m}\right)$
          \item \texttt{State Proposal: \\}$ x_t{n, m} \sim q_{t, \theta}\left(.|x_{t-1}^{a_{t-1}^{n, m}} \right)$
          \item \texttt{Weight Update: \\}$W_{t,\theta}\left( x_{t-1}^{a_{t-1}^{n,m}}\right) \sim \frac{w_{t,\theta}\left(x_{t-1}^{a_{t-1}^{n,m}}x_{t}^{n,m}\right)}
            {\sum x_{t-1}^{a_{t-1}^{n,m}}x_{t}^{n,m}}$
          \item \texttt{Observation probability: \\}
            $P(y_t | y_{1:t-1}, \theta^m) = \frac{\sum \limits_{n=1}^{N_x} w_{t,\theta}\left(x_{t-1}^{a_{t-1}^{n,m}}x_{t}^{n,m}\right)}{N_x} $
        \end{enumerate}
    \end{enumerate}
  \item {\bf Update Importance weights: \\}
   $\forall \theta^m \> w^{m} \leftarrow w^m P(y_t | y_{1: t-1, \theta^m}$
 \item {\bf Under degeneracy criterion:\\} \texttt{Move particles using Kernel}
   \begin{equation*}
     \begin{array}{ll}
       \left(\tilde{\theta^m}, \tilde{x_{1:t}^{1:N_x, m}}, \tilde{a_{1:t-1}^{1:N_x, m}} \right) \iid & \qquad\\
       \qquad \qquad
       \frac{\sum_ w^{m}\mathcal{K}_t\left({\theta^m}, {x_{1:t}^{1:N_x, m}}, {a_{1:t-1}^{1:N_x, m}} \right)}{\sum_m w^m}  & \qquad
     \end{array}
   \end{equation*}
 \item {\bf Weight Exchange: \\}
   $\left({\theta^m}, {x_{1:t}^{1:N_x, m}}, {a_{1:t-1}^{1:N_x, m}} \right) \leftarrow
     \left(\tilde{\theta^m}, \tilde{x_{1:t}^{1:N_x, m}}, \tilde{a_{1:t-1}^{1:N_x, m}} \right)$ 
\end{enumerate}

Here, $\mathcal{K}$ is a Markov kernel Targeting the posterior distribution. It can be shown that such Markov moves
don't change the target distribution and can alleviate the problem of particle degeneracy.

\subsection{\SMC~priors}
We used conjugate distributions to model the priors. 
For, $P(\theta)$ we used a mixture of Latin hypercube
sampling ($LHS$) and conjugate priors as follows:
\vspace{-8pt}
\begin{equation}
  \label{eq:smc:prior_theta}
  \begin{array}{rcl}
    \sigma_S, \vec{\rho_S}, \vec{\mu^1}, \vec{\mu^2} & \sim & LHS \\
    \Sigma_A & \sim & \text{InverseWishart},\ 
  \end{array}
\end{equation}

Similar to $P(\theta)$, we model the initial distribution
$P(x_0|\theta)$ via LHS sampling for the base values and by using the
model equations as presented in Section~{3.1}.
as follows:
\begin{equation}
  \label{eq:smc:prior_x}
  \begin{array}{rcl}
    \vec{c^K} & \sim & \text{Normal}\\
    \vec{\phi^k}, \vec{\rho^s}, & \sim & \text{Gamma}\\
  \end{array}
\end{equation}
The parameters of the distributions of $P(\theta)$ and $P(x_0|\theta)$ are 
called hyperparameters in the general domain of Bayesian Inference and
following standard practices are found via cross-validation.
\end{appendix}

\end{document}